 \documentclass[accepted,startpage=1]{uai2023} 

\usepackage[british]{babel}

\usepackage{natbib} 
    \bibliographystyle{apalike}

\usepackage{mathtools} 
\usepackage{booktabs} 
\usepackage{tikz} 

\usepackage{amssymb}
\usepackage{MnSymbol}
\usepackage{algorithm}
\usepackage{algorithmic}
\usepackage{ulem}

\newcommand{\BlackBox}{\rule{1.5ex}{1.5ex}}  
\newenvironment{proof}{\par\noindent{\bf Proof\ }}{\hfill\BlackBox\\[2mm]}
\newtheorem{dfn}{Definition}
\newtheorem{thm}{Theorem}
\newtheorem{prp}{Proposition}		

\newtheorem{lem}[prp]{Lemma}
\newtheorem{exm}{Example}



\newcommand{\aet}{\mathrel{\leftarrow}}
\newcommand{\aea}{\mathrel{\leftrightarrow}}
\newcommand{\aem}{\mathrel{{\leftarrow\mkern-11mu\ast}}}

\newcommand{\mea}{\mathrel{{\ast\mkern-11mu\to}}}
\newcommand{\met}{\mathrel{{\ast\mkern-11mu\relbar\mkern-9mu\relbar}}}
\newcommand{\tem}{\mathrel{{\relbar\mkern-11mu\relbar\mkern-11mu\ast}}}
\newcommand{\mem}{\mathrel{{\ast\mkern-11mu\relbar\mkern-9mu\relbar\mkern-11mu\ast}}}
\newcommand{\cec}{\mathrel{{\circ\mkern-8mu\relbar\mkern-8mu\circ}}}

\newcommand{\cem}{\mathrel{{\circ\mkern-8mu\relbar\mkern-9mu\relbar\mkern-11mu\ast}}}
\newcommand{\mec}{\mathrel{{\ast\mkern-10mu\relbar\mkern-9mu\relbar\mkern-8mu\circ}}}

\newcommand{\tet}{\mathrel{{\relbar\mkern-9mu\relbar}}}

\newcommand{\tea}{\mathrel{\rightarrow}}



\newcommand{\bfA}{\mathbf{A}}
\newcommand{\bfB}{\mathbf{B}}
\newcommand{\bfC}{\mathbf{C}}

\newcommand{\bfE}{\mathbf{E}}

\newcommand{\bfO}{\mathbf{O}}

\newcommand{\bfR}{\mathbf{R}}
\newcommand{\bfS}{\mathbf{S}}

\newcommand{\bfV}{\mathbf{V}}
\newcommand{\bfW}{\mathbf{W}}
\newcommand{\bfX}{\mathbf{X}}
\newcommand{\bfY}{\mathbf{Y}}
\newcommand{\bfZ}{\mathbf{Z}}

\newcommand{\bfv}{\mathbf{v}}
\newcommand{\bfw}{\mathbf{w}}
\newcommand{\bfx}{\mathbf{x}}



\newcommand{\G}{\mathcal{G}}

\newcommand{\M}{\mathcal{M}}

\newcommand{\cP}{\mathcal{P}}





\newcommand{\seq}[1]{\langle{#1}\rangle}
%








\newcommand{\pa}{\; \mathrm{pa}}			


\newcommand{\bff}{\mathbf{f}}
\newcommand{\bfg}{\mathbf{g}}

\newcommand{\dom}[1]{{\mathcal{X}_{#1}}}
\newcommand{\rv}[1]{\mathfrak{X}_{#1}}
\renewcommand{\pa}{\mathrm{pa}}
\newcommand{\an}{\mathrm{an}}

\newcommand{\RN}{\mathbb{R}}
\newcommand{\Prb}{\mathbb{P}}

\title{Establishing Markov Equivalence in Cyclic Directed Graphs}

%
%
\author[1]{Tom~Claassen}
\author[2]{\href{mailto:<j.m.mooij@uva.nl>}{Joris~M.~Mooij}{}}
\affil[1]{%
    Institute for Computing and Information Sciences\\
    Radboud University\\
    Nijmegen, Netherlands
}
\affil[2]{%
    Korteweg-deVries Institute\\
    University of Amsterdam\\
    Amsterdam, Netherlands
}
  
  \begin{document}
\maketitle

\begin{abstract}
  We present a new, efficient procedure to establish Markov equivalence between directed graphs that may or may not contain cycles under the \textit{d}-separation criterion.
It is based on the Cyclic Equivalence Theorem (CET) in the seminal works on cyclic models by Thomas Richardson in the mid '90s, but now rephrased from an ancestral perspective. The resulting characterization leads to a procedure for establishing Markov equivalence between graphs that no longer requires 
tests for \textit{d}-separation, leading to a significantly reduced algorithmic complexity. The conceptually simplified characterization may help to reinvigorate theoretical research towards sound and complete cyclic discovery in the presence of latent confounders.

\textbf{This version includes a correction to rule (iv) in Theorem 1, and the subsequent adjustment in part 2 of Algorithm 2.}
\end{abstract}

\section{Introduction}\label{sec:1-Intro}
Discovering causal relations from observational and experimental data is one of the key goals in many research areas. 
Developing principled, automated causal discovery methods has been an active area of research within the machine learning community, which has resulted in a wide variety of algorithms and techniques.
Two of the main challenges here are handling the impact of unobserved confounders, and the possible presence of feedback mechanisms or cycles in the system under investigation. Both have a long history in the field: in this article we solely focus on the latter.

Building on earlier work by \cite{Spirtes1994, Spirtes1995} on (linear) cyclic directed models that obey the global directed Markov property (see section \ref{sub:graphs}, below), \cite{Richardson1996a_CCD} introduced the Cyclic Causal Discovery (CCD) algorithm that was able to infer a sound cyclic causal model from independence constraints on data. It was based on the so-called Cyclic Equivalence Theorem \citep{Richardson1997} that characterized Markov equivalence between cyclic directed graphs. 

Strangely enough, after this promising start progress in cyclic directed models slowly ground to a halt, even though many challenges remained: the CCD output was certainly not complete, and could not account for latent confounders.

In the mean time theory and methods for acyclic causal discovery took flight, where, for example \cite{Zhang2008} managed to extend FCI to a provably sound and complete algorithm under latent confounders and selection bias.

And even to this day fundamental progress continues to be made: recently several new and faster algorithms and characterizations for establishing Markov equivalence between maximal ancestral graphs (graphical independence models closed under marginalization and conditioning) have been developed \citep{HuE2020,WienobstBL2022,ClaassenB2022}, ultimately bringing it down to linear complexity for sparse graphs. However, despite a widely acknowledged need to handle feedback cycles in learning algorithms for real world causal discovery, major steps towards that goal have been few and far between.

A promising attempt to extend CCD to the case of unobserved confounders was made by \citet{Strobl2018}, but though the resulting CCI algorithm was sound, it was by no means complete, foregoing on key FCI elements like discriminating paths and selection bias, and the output was not guaranteed to uniquely identify the Markov equivalence class.

Fundamentally different approaches to cyclic causal discovery have also been developed: for example, \citet{LacerdaSRH2008} employs independent component analysis, \citet{Mooij_et_al_NIPS_11,MooijHeskes_UAI_13} proposed likelihood-based structure learning approaches for additive noise models, \cite{HyttinenEH2012} exploits experiments to build a complete model, and \cite{RothenhauslerHPM2015} builds on information from unknown shift interventions to reconstruct the underlying cyclic causal graph.

On another front, \cite{ForreM2018} showed that for \textit{nonlinear} causal models with cycles and confounders, the usual $d$-separation criterion needs to be replaced with their $\sigma$-separation criterion (see also section 3 in the supplement). More recently, \cite{MooijC2020} showed that vanilla FCI was in fact already sound and complete for these nonlinear cyclic models. However, it does not account for the peculiarities encountered when handling \textit{linear} cyclic models, as in Figure \ref{fig1TwoCycle}.

For linear or discrete cyclic causal models, $\sigma$-separation is too weak, as the stronger \textit{d}-separation may apply. Perhaps surprisingly, this significantly complicates the causal structure analysis. But even in nonlinear systems we often consider linear approximations, which means in practice we may expect to encounter similar complications there as well.
  In section 3 in the supplement we summarize some results from the literature under which cyclic causal models are known to satisfy the stronger \textit{d}-separation criterion. For the current paper it suffices to know that we focus on \textit{d}-separation equivalence between cyclic directed graphs with no unobserved confounders, which, for the important class of systems where the global directed Markov condition in combination with its corresponding faithfulness assumption holds, also implies Markov equivalence.

Part of the reason for the slow progress on cyclic models that satisfy the \textit{d}-separation criterion may be that the associated theoretical machinery developed to characterize Markov equivalence is quite imposing, which may make 
extensions towards confounders seem an overly daunting task.

In this article we find things may not be quite as bad as perhaps once feared. We show, for example, that establishing Markov equivalence between directed graphs becomes more intuitive when viewed from an \textit{ancestral} perspective, leading to a simplified characterization and an efficient algorithm that greatly speeds up identification.
Although this is of course but a small step, we hope that it may inspire renewed investigation into full-fledged cyclic causal discovery in the presence of latent confounders and selection bias.

In the rest of the article, section \ref{sec:2-CyclicGraphs} introduces the necessary tools to handle cyclic directed graphs, section \ref{sec:3-NewCET} describes an alternative, ancestral formulation of the CET, section \ref{sec:4-MarkovEq} shows how to infer a graphical characterization of the Markov equivalence class without the need for \textit{d}-separation tests, and section \ref{sec:5-ExpEval} demonstrates the remarkable efficiency of the resulting procedure compared to current state of the art.
Detailed proofs 
as well as some additional experimental results are provided in the 
supplement.

\begin{figure}[h]
  \centering
  \includegraphics[width=0.9\linewidth,page=1]{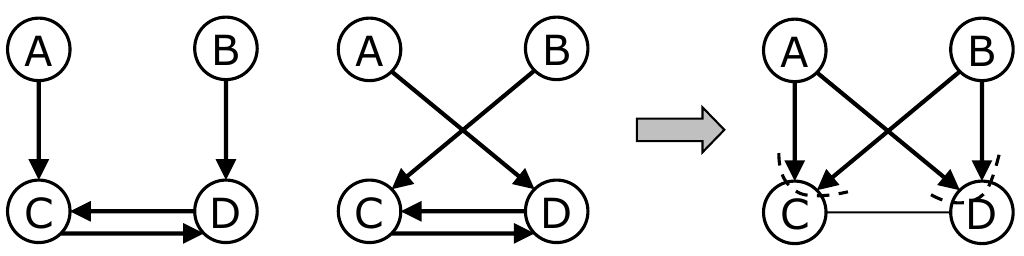}
  \caption{\small Two different cyclic graphs (left) that together form the only two members of the Markov equivalence class on the right, where the dashed lines signal two \textit{virtual} \textit{v}-structures (see $\S$\ref{sub:introCMAG}). For linear/discrete models conditioning on $C$ would make $A$ and $B$ dependent, but conditioning on $\{C,D\}$ would not.} 
  \label{fig1TwoCycle}
\end{figure}

\section{Cyclic Directed Graphs} \label{sec:2-CyclicGraphs}
In this section we start with a few standard graphical model definitions, and then continue with some perhaps less familiar terminology and results specific to cyclic graphs. 

\subsection{Graph notations and terminology} \label{sub:graphs}
Throughout this article we use capital letters for vertices/variables, boldface capitals to indicate sets, and calligraphic letters to indicate graphs or distributions.

A \textit{directed graph} (DG) $\G$ is an ordered pair $\seq{\bfV,\bfE}$, where $\bfV$ is a set of vertices (nodes), and $\bfE$ is a set of directed edges (arcs) between vertices. Two nodes in $\G$ are \textit{adjacent} if they are connected by an edge, two edges are \textit{adjacent} if they share a node. A \textit{path} in the graph $\G$ is a sequence of adjacent edges where each consecutive pair along the path is adjacent in $\G$ and each node occurs at most once, or just a single node (a \textit{trivial} path). A \textit{directed path} $X_0 \tea X_1 \tea .. \tea X_k$ is a path where each pair of consecutive nodes is connected by an arc $X_i \tea X_{i+1}$ in $\G$. A \textit{cycle} is a directed path $X_0 \tea .. \tea X_k$ together with an edge $X_k \tea X_0$. A directed graph with no cycles is called a \textit{directed acyclic graph} (DAG).
If $X \tea Y$ in $\G$ then $X$ is called a \textit{parent} of $Y$, and $Y$ a \textit{child} of $X$. Similarly, if there is a directed path from $X$ to $Y$ in $\G$ then $X$ is an \textit{ancestor} of $Y$, and $Y$ a \textit{descendant} of $X$. We use $pa_{\G}(X)$ to denote the set of parents of $X$ in graph $\G$. 
Idem $ch_{\G}(X)$, $an_{\G}(X)$ and $de_{\G}(X)$ for the sets of children, ancestors, and descendants of $X$ in $\G$, with natural extensions to sets, e.g.\ $pa_{\G}(\bfX): \{V: \exists X \in \bfX, V \in pa_{\G}(X) \}$.
A node $Z$ is a \textit{collider} on a path $\seq{..,X,Z,Y,..}$ if the subpath is of the form $X \tea Z \aet Y$, otherwise it is a \textit{noncollider}. A triple of nodes $\seq{..,X,Z,Y,..}$ on a path is said to be \textit{unshielded} if $X$ and $Y$ are not adjacent in $\G$. An unshielded collider $X \tea Z \aet Y$ is known as a \textit{v-structure}.

A \textit{DG model} is an ordered pair $\seq{\G,\cP}$ where $\G$ is a (cyclic or acyclic) directed graph and $\cP$ is a probability distribution over the vertices (variables) in $\G$. The \textit{global directed Markov property} links the structure of the graph $\G$ to probabilistic independences in $\cP$ via the \textit{d}-separation criterion: for sets of vertices $\bfX,\bfY,\bfZ$ in a graph $\G$, $\bfX$ is \textit{d-connected} to $\bfY$ given $\bfZ$ iff there is an $X \in \bfX$ and $Y \in \bfY$ such that there is a path $\pi$ between $X$ and $Y$ on which every noncollider is not in $\bfZ$, and every collider on $\pi$ is an ancestor of $\bfZ$; otherwise $\bfX$ and $\bfY$ are said to be \textit{d-separated} given $\bfZ$. Two graphs $\G_1$ and $\G_2$ are said to be \textit{d-separation (Markov) equivalent} iff every \textit{d}-separation in $\G_1$ also holds in $\G_2$ and v.v.
For more details on graphical causal models, see \citep{KollerF2009, SGS2000, Pearl2009, Bongers++_AOS_21}. In section 3 in the supplement, we provide more details on Markov properties in structural causal models, and describe some concrete classes of models for which the \textit{d}-separation criterion applies.

\subsection{Features of Cyclic Graphs}
Next we will state a few properties and definitions that are specific to directed graphs with cycles.

\begin{dfn} \label{dfn:scc}
In a directed graph $\G$ over set of vertices $\bfV$, a subset $\bfS \subseteq \bfV$ is a \textbf{strongly connected component (SCC)} of $\G$ iff $\bfS$ is a maximal set of vertices where every vertex is reachable via a directed path in $\G$ from every other vertex in $\bfS$.
\end{dfn}

In cyclic graphs the presence of arcs into directed cycles can create dependencies that behave like additional induced edges:
\begin{dfn} \label{dfn:virtual-edge}
In a graph $\G$, two nodes $A$ and $B$ are said to be \textbf{virtually adjacent} iff there is no edge between $A$ and $B$ in $\G$, but $A$ and $B$ have a common child $C$ which is an ancestor of $A$ or $B$.
\end{dfn}
Two nodes connected by a virtual edge cannot be \textit{d}-separated by any set of nodes, and therefore appear like they are connected by an edge. In \citep{Richardson1997} virtual edges were also called \textit{p(seudo)-adjacent}. 

These induced virtual edges can also be part of paths we have to consider, giving rise to the generalized concept of an itinerary:
\begin{dfn} \label{dfn:itinerary}
In a graph $\G$, a sequence of vertices $\seq{X_0,...,X_{n+1}}$ where all neighbouring nodes in the sequence are (virtually) adjacent in the graph is said to be an \textbf{itinerary}.
If none of the nodes on the itinerary are (virtually)  adjacent to each other except for the ones that occur consecutively on it then the itinerary is said to be \textbf{uncovered}, otherwise it is said to be \textbf{covered}.
\end{dfn}

Virtual edges can also appear in regular (non)collider triples, leading to the generalized notion of (non)conductors:
\begin{dfn} \label{dfn:non+conductor}
In a graph $\G$, a triple $\seq{A,B,C}$ forms a \textbf{conductor} if $\seq{A,B,C}$ is an itinerary, and $B$ is an ancestor of $A$ and/or $C$. If $\seq{A,B,C}$ is an itinerary, but $B$ is NOT an ancestor of $A$ or $C$, then $\seq{A,B,C}$ is a \textbf{nonconductor}.
A (non)conductor $\seq{A,B,C}$ is said to be \textbf{unshielded} if $A$ and $C$ are not (virtually) adjacent, otherwise it is \textbf{shielded}.
\end{dfn}

In some case we can actually detect the presence of some induced edge, although we can never be sure which one:
\begin{dfn} \label{dfn:im+perfect noncond}
In a graph $\G$ a nonconductor triple $\seq{A,B,C}$ is a \textbf{perfect nonconductor} if $B$ is also a descendant of a common child of $A$ and $C$. If not, then $\seq{A,B,C}$ is an \textbf{imperfect nonconductor}.
\end{dfn}
Key notion here is that for unshielded perfect nonconductors conditioning on a set that includes $B$ \textit{always} creates a dependence between $A$ and $C$, whereas unshielded imperfect nonconductors do create a dependence when conditioning on $B$, but not for \textit{every} set containing $B$. This is impossible in acyclic graphs and is therefore a hallmark for the presence of cycles. See the two virtual \textit{v}-structures in Figure \ref{fig1TwoCycle} for an example.

Finally, as pièce de résistance, we have some patterns that introduce a nonlocality aspect:
\begin{dfn} \label{dfn:me-cond}
If $\seq{X_0,...,X_{n+1}}$ is a sequence of vertices such that each consecutive triple along the (uncovered) itinerary is a conductor, and all nodes $\{X_1,..,X_n\}$ are ancestors of each other, but not ancestors of either $X_0$ or $X_{n+1}$, then the triples $\seq{X_0,X_1,X_2}$ and $\seq{X_{n-1},X_n,X_{n+1}}$ are \textbf{mutually exclusive (m.e.) conductors w.r.t. an (uncovered) itinerary}.
\end{dfn}
An example is depicted in Figure \ref{fig2Ustruct}. As a result, graphs that have identical \textit{d}-separation relations locally everywhere in the graph can still differ regarding a \textit{d}-separation between nodes that are arbitrarily far apart in the graph (something that is impossible in the acyclic case).

\subsection{The Cyclic Equivalence Theorem} \label{sub:orgCET}
With the features introduced in the previous section \cite{Richardson1997} established the following characterization:

\textbf{Cyclic Equivalence Theorem (CET)}: Two directed graphs $\G_1$ and $\G_2$ over vertices $\bfV$ are Markov (\textit{d}-separation) equivalent iff
\begin{enumerate}
\item[(i)] they have the same (virtual) adjacencies,
\item[(ii).a] they have the same unshielded conductors,
\item[(ii).b] they have the same unshielded perfect nonconductors,
\item[(iii)] two triples $\seq{A,B,C}$ and $\seq{X,Y,Z}$ are mutually exclusive conductors on some uncovered itinerary $P = \seq{A,B,C,..,X,Y,Z}$ in $\G_1$ iff they are also m.e. conductors on some uncovered itinerary in $\G_2$,
\item[(iv)] if $\seq{A,X,B}$ and $\seq{A,Y,B}$ are unshielded imperfect nonconductors in $\G_1$ and $\G_2$, then $X$ is an ancestor of $Y$ in $\G_1$ iff $X$ is an ancestor of $Y$ in $\G_2$,
\item[(v)] if $\seq{A,B,C}$ and $\seq{X,Y,Z}$ are m.e.\ conductors on an uncovered itinerary $P = \seq{A,B,C,..,X,Y,Z}$, and $\seq{A,M,Z}$ is an unshielded imperfect nonconductor (in $\G_1$ and $\G_2$), then $M$ is a descendant of $B$ in $\G_1$ iff $M$ is a descendant of $B$ in $\G_2$.
\end{enumerate}

\begin{figure*}[t]
\begin{center}
\includegraphics[scale=0.5]{./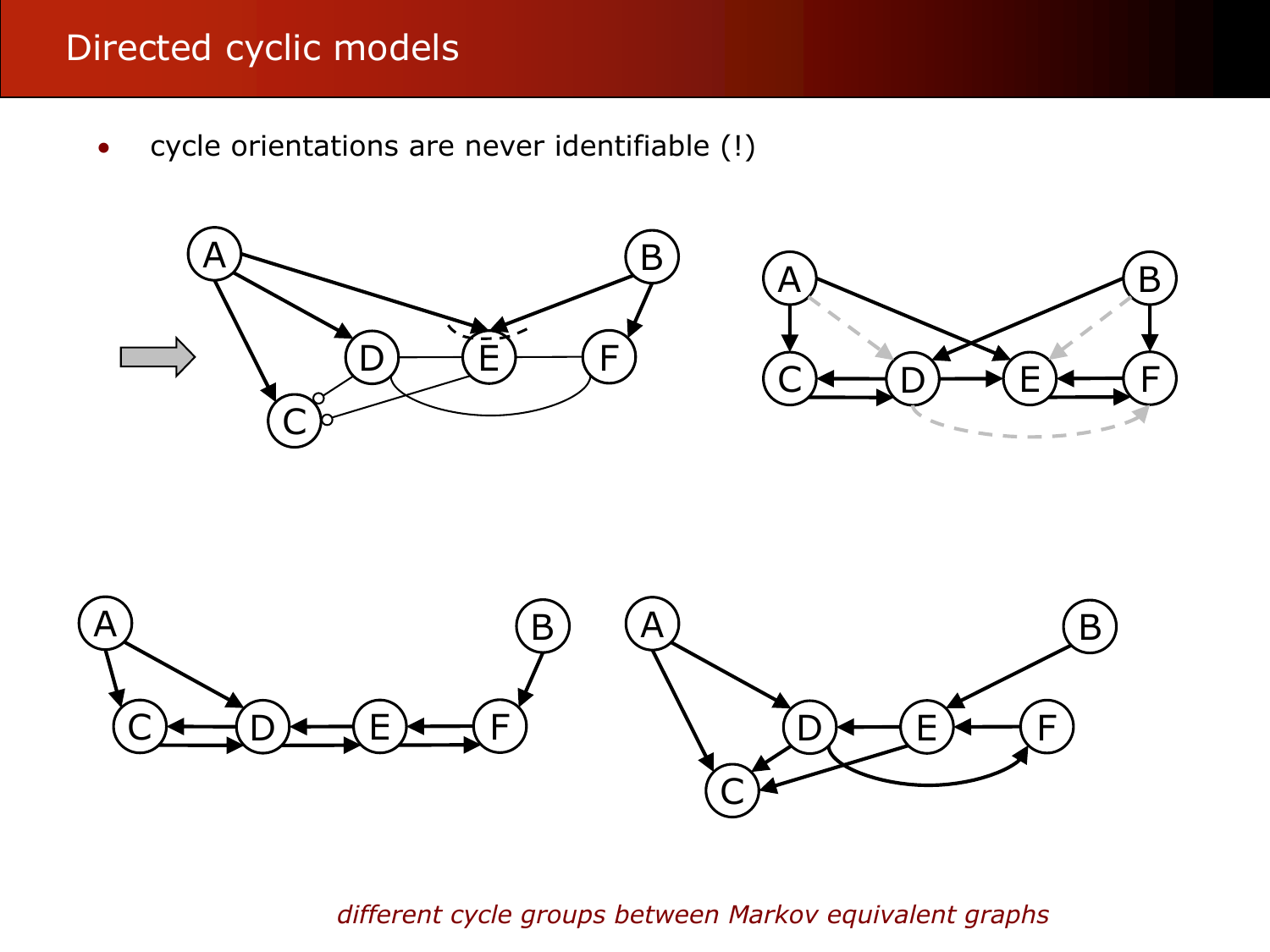}
\includegraphics[scale=0.5]{./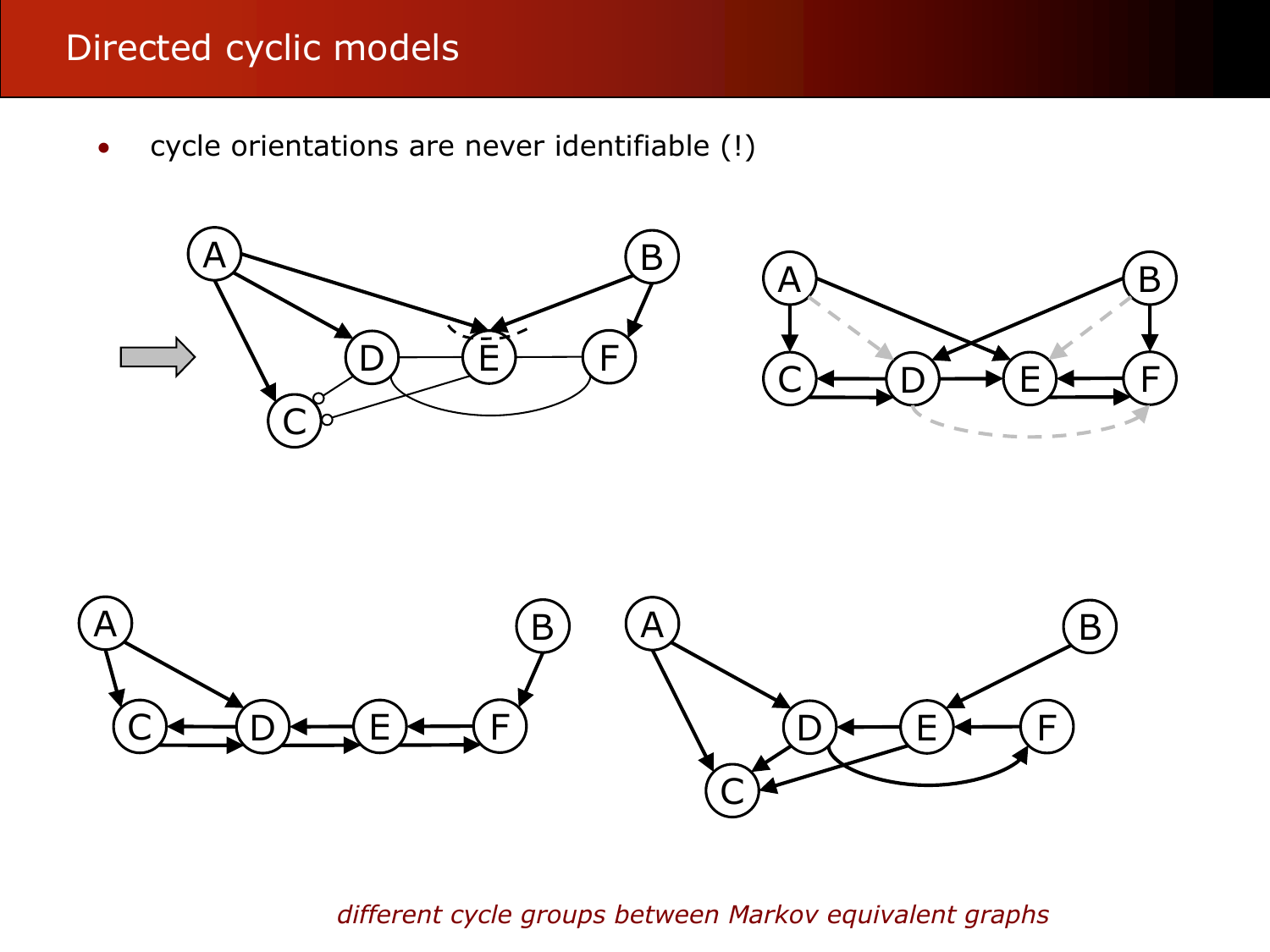}
\caption{\small{Two Markov equivalent graphs (left) with $\seq{A,D,F}$ and $\seq{D,F,B}$ a pair of m.e.\ conductors on uncovered itinerary $\seq{A,D,F,B}$; (right) corresponding (maximally informative) CPAG.}} \label{fig2Ustruct}
\end{center}
\end{figure*}

\subsection{Cyclic PAGs}
To characterize the (\textit{d}-separation) Markov equivalence class of a cyclic directed graph $\G$, denoted $MEC(\G)$, \cite{Richardson1996b_MEC} described an algorithm that created a set of exhaustive lists of instances in the graph matching one of the individual rules in the CET, above. Establishing Markov equivalence then boils down to comparing the lists constructed for each.

Later on, \cite{Richardson1996a_CCD} introduced a more intuitive graphical representation in the form of a (cyclic) partial ancestral graph that also captured enough elements to uniquely identify the equivalence class of a directed graph: 

\begin{dfn} \label{dfn:CPAG}
A graph $\cP$ is a \textbf{partial ancestral graph (PAG)} for directed (a)cyclic graph $\G$ with vertex set $\bfV$, iff
\begin{enumerate}
\item[(i)] there is an edge between vertices $A$ and $B$ iff $A$ and $B$ are d-connected given any subset $\bfW \subseteq \bfV \setminus \{A,B\}$,
\item[(ii)] If $A \tem B$ is in $\cP$, then in every graph in $MEC(\G)$, $A$ is ancestor of $B$,
\item[(iii)] If $A \mea B$ is in $\cP$, then in every graph in $MEC(\G)$, $B$ is NOT an ancestor of $A$,
\item[(iv)] if \mbox{$A \, \ast \!\!\! \relbar \!\!\!$ \underline{$\ast \, B \, \ast$} $ \!\!\! \relbar \!\!\! \ast \, C$} in $\cP$, then $B$ is ancestor of $A$ and/or $C$ in every $\G' \in MEC(\G)$,
\item[(v)] if \mbox{$A \! \relbar \!$ \dashuline{$\!\!\rightarrow \! B \! \leftarrow \!\!$} $ \! \relbar \! C$} in $\cP$, then $B$ is NOT a descendant of a common child of $A$ and $C$ in every $\G' \in MEC(\G)$,
\item[(vi)] any remaining edge mark not oriented in the above ways obtains a circle mark $\cem$ in $\cP$.
\end{enumerate}
We use the term \textbf{cyclic PAG (CPAG)} of a graph $\G$ to denote a PAG $\cP$ that captures invariant ancestral relations shared by all and only the graphs in the Markov equivalence class of $\G$.
\end{dfn}
In these rules the asterisk \mbox{$\ast \!\! \relbar$} mark on an edge is used as a meta symbol that represents any of the other marks $\{-,>,\circ\}$.
The solid underlining in rule (iv), indicating that the middle node is \textit{not} a collider between the other two, is superfluous and therefore often omitted from the graph $\cP$. The dashed underlining in rule (v), however, \textit{is} essential, and unique to cyclic graphs, and appears in the virtual \textit{v}-structures introduced in $\S$\ref{sub:introCMAG}. See Figure \ref{fig2Ustruct} for an example CPAG.

The CPAG has the same purpose and interpretation as the familiar PAG output by the well-known FCI algorithm (\cite{SGS2000, Zhang2008}), including circle marks $X \cem Y$ from rule (vi) to explicitly denote `not determined'. This can be either because the implied ancestral relation is not invariant between all members in the Markov equivalence class of $\G$, i.e.\ there are some graphs where $X$ is an ancestor of $Y$ and some where it is not (`can't know'), or because the relation \textit{is} invariant but we have not determined what it is yet (`don't know'). 
As a result, a graph $\G$ can correspond to different CPAGs $\cP$ that differ in level of completeness. 
In this paper we are not concerned with obtaining the (unique) \textit{maximally informative} CPAG, but instead settle for any \textit{Markov complete} PAG that represents a unique (\textit{d}-separation) Markov equivalence class.

\subsection{CPAG-from-Graph Algorithm} \label{sub:2.5-CPAGfromG}

Using the CPAG definition above we now describe an algorithm by \cite{Richardson1996c_DCCS} that takes as input a (possibly cyclic) directed graph $\G$ and outputs a CPAG $\cP$ such that two graphs $\G_1$ and $\G_2$ are Markov equivalent iff the algorithm outputs the same CPAG for both. In other words, the algorithm is \textit{d-separation complete}.

\begin{enumerate}
\item[(a)] form the complete undirected graph $\cP$ with all circle edges $\cec$, and then for every edge $A \cec B$ in $\cP$, if $A$ is \textit{d}-separated from $B$ given $\bfC = An(\{A,B\}) \setminus \{A,B\}$ then remove edge $A \cec B$ from $\cP$ and record $\bfC$ in $Sepset(A,B)$ and $Sepset(B,A)$,
\item[(b)] for each unshielded triple $A \mem B \mem C$ in $\cP$, orient $A \tea B \aet C$ if $B \notin Sepset(A,C)$,
\item[(c)] for each triple $\seq{A,X,Y}$ such that $X \mem Y$ in $\cP$, $A$ is not adjacent to $X$ or $Y$ in $\cP$, $X \notin Sepset(A,Y)$, orient $X \aet Y$ if $A$ and $X$ are d-connected given $Sepset(A,Y)$,
\item[(d)] for each unshielded triple $A \tea B \aet C$ in $\cP$, if $A$ and $C$ are \textit{d}-separated given a specific set $\bfR$,\footnote{We omit the definition of the set $\bfR$ here for brevity.} then orient \mbox{$A \! \relbar \!$ \dashuline{$\!\!\rightarrow \! B \! \leftarrow \!\!$} $ \! \relbar \! C$} in $\cP$ and record $\bfR$ in $SupSepset\seq{A,B,C}$ (and $SupSepset\seq{C,B,A}$),
\item[(e)] for each quadruple $\seq{A,B,C,D}$, if \mbox{$A \! \relbar \!$ \dashuline{$\!\!\rightarrow \! B \! \leftarrow \!\!$} $ \! \relbar \! C$} in $\cP$, \mbox{$A \tea D \aet C$} or \mbox{$A \! \relbar \!$ \dashuline{$\!\!\rightarrow \! D \! \leftarrow \!\!$} $ \! \relbar \! C$} in $\cP$, $B$ and $D$ are adjacent in $\cP$, then if $D \in SupSepset\seq{A,B,C}$ then orient \mbox{$B \met D$}, otherwise orient $B \tea D$ in $\cP$,
\item[(f)] for each quadruple $\seq{A,B,C,D}$, such that \mbox{$A \! \relbar \!$ \dashuline{$\!\!\rightarrow \! B \! \leftarrow \!\!$} $ \! \relbar \! C$} in $\cP$, and $D$ is not adjacent to both $A$ and $C$ in $\cP$, if $A$ and $C$ are \textit{d}-connected given $SupSepset\seq{A,B,C} \cup \{D\}$, then orient $B \mec D$ as $B \tea D$.
\end{enumerate}

The algorithm has complexity $O(N^7)$, and is \textit{d}-separation complete:

\textbf{Theorem 2} in \citep{Richardson1996c_DCCS}: For two graphs $\G_1$ and $\G_2$, the CPAG-from-Graph algorithm outputs corresponding CPAGs $\cP_1$ and $\cP_2$ that are identical iff $\G_1$ and $\G_2$ are \textit{d}-separation equivalent. 

Actually, the theorem was formulated for the CCD algorithm \citep{Richardson1996a_CCD} for obtaining a CPAG from (oracle) independence information, but the two are so similar that the proof automatically carries over to the CPAG-from-Graph algorithm. 
The algorithm is an improvement by a factor $O(N^2)$ on the earlier list-based Cyclic Classification algorithm in \citep[$\S5.4$]{Richardson1996b_MEC}. 

\section{An ancestral perspective on the CET} \label{sec:3-NewCET}
On reflection of the characterization of Markov equivalence between cyclic graphs obtained, one may note that the rather daunting definitions and terminology in the CET seem to contrast quite sharply with the apparent simplicity of the actual invariant features contained in the CPAG. At the same time complicated again by the fact that some of these `invariant features' like edges in the CPAG are not actually invariant in the underlying graph at all. 

Furthermore, there is no clear match from some rules in the CET to specific invariant features in the CPAG. In particular the `mutually exclusive conductors on an uncovered itinerary'\footnote{Actually this term is a bit of a misnomer, as the two conductors need not be mutually exclusive when there is an induced virtual edge along the uncovered itinerary connecting the two.} in rule CET-(iii) are never explicitly recorded, even though they can of course be inferred from the CPAG afterwards.

A natural question, inspired by the familiar DAG-MAG-PAG triad for acyclic graphs, would be whether it might make sense to also consider an intermediate ancestral stage for cyclic graphs.

In this section we answer that question with an emphatic: yes!  We first introduce the CMAG as the cyclic analogue to the (acyclic) maximal ancestral graph \citep{RichardsonS2002}, and rephrase the CET in terms of ancestral graphs. This results in a simplified set of rules that each are in direct correspondence with invariant features in the CPAG. In the next section we will show that this approach also leads to an efficient procedure to establish Markov equivalence that no longer needs to rely on \textit{d}-separation tests.

\subsection{Introducing the CMAG} \label{sub:introCMAG}

In keeping with the spirit of regular (acyclic) maximal ancestral graphs, we will define a cyclic MAG as: 

\begin{dfn} \label{dfn:CMAG}
The \textbf{cyclic maximal ancestral graph (CMAG)} $\M$ corresponding to (cyclic) directed graph $\G$ over set of vertices $\bfV$ is a graph where:
\begin{enumerate} [label=(\roman*)]
\item there is an edge between every distinct pair of vertices $\{X,Y\}$ iff they cannot be \textit{d}-separated by any subset of $\bfV \setminus \{X,Y\}$ in $\G$, 
\item there is a tail mark $X \tem Y$ at vertex $X$ on the edge to $Y$ iff there exists a directed path from $X$ to $Y$ in $\G$, otherwise there is an arrowhead mark $X \aem Y$, 
\item every unshielded collider triple $X \tea Z \aet Y$ in $\M$ where $Z$ is not a descendant of a common child of $X$ and $Y$ in $\G$ obtains a dashed underline \mbox{$X \! \relbar \!$ \dashuline{$\!\!\rightarrow \! Z \! \leftarrow \!\!$} $ \! \relbar \! Y$}.
\end{enumerate}
Unshielded collider triples without underlining are called \textbf{v-structures}. The `dashed-underlined' collider triples in a CMAG are referred to as \textbf{virtual v-structures}. 
\end{dfn}

With this definition, a CPAG becomes a straightforward collection of invariant edges and edge marks (rather than `ancestral relations') shared by all and only the CMAGs corresponding to graphs in the same Markov equivalence class. 

The `virtual' in the dashed-underlined \textit{v}-structures from rule (iii)  emphasises that they resemble regular \textit{v}-structures in the CMAG, but look and behave differently in the underlying directed graph $\G$. They are a direct consequence of rule (v) in Def.\ \ref{dfn:CPAG}, and correspond to unshielded imperfect nonconductors in $\G$, that are unique to cyclic graphs. In a CMAG $\M$, node $A$ is an \textit{ancestor} of node $B$ (and $B$ a \textit{descendant} of $A$) iff there exists an ancestral path $A \tem .. \tem B$ in $\M$. 

An SCC in directed graph $\G$ corresponds to a \textit{maximal set of nodes in a connected, undirected subgraph} in $\M$, as each node in an SCC is ancestor of all other nodes in the same SCC. Given this one-to-one correspondence we will also use $SCC(Z)$ in the context of a CMAG $\M$ to denote the nodes in the strongly connected component of $Z$ in $\G$.

\subsection{Virtual collider triples}
Having brought out the CMAG we can make a straightforward mapping from elements in the CET to their ancestral counterpart: (virtual) adjacencies become edges, itineraries become paths, unshielded conductors become unshielded noncolliders, unshielded (perfect) nonconductors become v-structures, and unshielded imperfect nonconductors become virtual v-strucutures.

That only leaves the `mutually exclusive conductors w.r.t.\ an uncovered itinerary'. For that we note that these only appear in the CPAG as the invariant arcs into a cycle, oriented in step (c) of the CPAG-from-Graph algorithm.
In other words, from an ancestral perspective it is not about the conductor triples at the beginning and end of the uncovered itinerary, but only about the first and last edge along the corresponding path in the CMAG.

This brings us to the following definition:

\begin{dfn} \label{dfn:u-struct}
In a CMAG $\M$, a quadruple of distinct nodes $\seq{X,Z,Z',Y}$ is a \textbf{u-structure} if there is an uncovered path $X \tea Z \tet .. \tet Z' \aet Y$ in $\M$, where all intermediate nodes are also in $SCC(Z)$.
\end{dfn}
The term \textit{u}-structure reflects the fact that it is similar to a \textit{v}-structure, but with the central collider node replaced by an uncovered path through a strongly connected component. 

There is a straightforward connection between \textit{u}-structures and the `m.e.\ conductors w.r.t.\ an uncovered itinerary' from Definition \ref{dfn:me-cond}:
\begin{lem} \label{lem:me=ustruct}
  For a directed graph $\G$ and corresponding CMAG $\M$, there is a u-structure $\seq{X,Z,Z',Y}$ in $\M$ iff there is an uncovered itinerary $\pi = \seq{X,Z,U,..,U',Z',Y}$ in $\G$, possibly with $Z=U'$ or $U = U'$, where $\seq{X,Z,U}$ and $\seq{U',Z',Y}$ are a pair of m.e.\ conductors w.r.t.\ the uncovered itinerary $\pi$ in $\G$.
\end{lem}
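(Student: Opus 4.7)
The plan is to prove both directions by carefully translating between the (virtual) adjacencies and ancestral relations in $\G$ and the edges plus edge marks in $\M$, leveraging Definition \ref{dfn:CMAG}. The three workhorse observations are that adjacency in $\M$ is equivalent to (virtual) adjacency in $\G$; that tail/arrowhead marks in $\M$ encode the existence/non-existence of directed paths in $\G$; and that all nodes in a common SCC of $\G$ are mutually ancestors, inducing a maximal connected undirected subgraph in $\M$.

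For the direction $(\Rightarrow)$, I would take an uncovered path $X \tea Z \tet W_1 \tet \cdots \tet W_k \tet Z' \aet Y$ in $\M$ with each intermediate $W_i$ in $SCC(Z)$ (possibly $k=0$), and read off the induced sequence of nodes as a candidate itinerary in $\G$. Consecutive nodes on the path are adjacent in $\M$ hence (virtually) adjacent in $\G$, so the sequence is an itinerary; non-consecutive nodes are non-adjacent in $\M$ hence not (virtually) adjacent in $\G$, so the itinerary is uncovered. Each consecutive triple is a conductor because the middle node lies in $SCC(Z)$ and is thus an ancestor of its SCC-side neighbour (e.g.\ $Z$ is an ancestor of $W_1$ in the triple $\seq{X,Z,W_1}$, and $Z'$ is an ancestor of $W_k$ in $\seq{W_k,Z',Y}$). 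Finally, the arrowhead at $Z$ forces $Z$ not to be an ancestor of $X$ in $\G$; since every other internal node is in $SCC(Z)$ and hence a mutual ancestor of $Z$, none of them can be an ancestor of $X$ either, for otherwise $Z$ would be too. A symmetric argument at the $Y$ end yields the full m.e.\ conductor condition of Definition \ref{dfn:me-cond}.

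For the direction $(\Leftarrow)$, let $\pi = \seq{X,Z,U,\ldots,U',Z',Y}$ be an uncovered itinerary in $\G$ whose endpoint triples form m.e.\ conductors. The internal nodes are mutually ancestors in $\G$ by Definition \ref{dfn:me-cond}, so they inhabit a single SCC, and consecutive internal edges therefore acquire tails on both ends in $\M$, giving a candidate path whose intermediate nodes lie in $SCC(Z)$; the adjacency correspondence again transfers uncoveredness from $\G$ to $\M$. The main obstacle is to upgrade the endpoint edge marks to exactly $X \tea Z$ and $Z' \aet Y$. The arrowheads at $Z$ and $Z'$ are immediate from the m.e.\ conductor condition, which forbids them from being ancestors of $X$ and $Y$ respectively. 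For the tail at $X$, I split on whether $X$ and $Z$ are truly adjacent or only virtually adjacent in $\G$. If truly adjacent, the directed edge must be $X \tea Z$ since $Z$ is not an ancestor of $X$, making $X$ an ancestor of $Z$. If only virtually adjacent, $X$ and $Z$ share a common child $C$ that is an ancestor of $X$ or $Z$ (Definition \ref{dfn:virtual-edge}); the `ancestor of $X$' option would route a directed path $Z \tea C \tea \cdots \tea X$ contradicting the m.e.\ conductor condition, so $C$ is an ancestor of $Z$, yielding $X \tea C \tea \cdots \tea Z$ and again making $X$ an ancestor of $Z$. A symmetric case analysis at the $Y$ end completes the u-structure.
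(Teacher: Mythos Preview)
Your proposal is correct and follows essentially the same route as the paper's proof: both directions proceed by translating the uncovered path/itinerary between $\M$ and $\G$ via the adjacency correspondence (Lemma~\ref{lem:skeleton}) and reading off ancestral relations from the SCC structure and the edge marks. Your argument is in fact more thorough than the paper's in the $(\Leftarrow)$ direction: the paper simply asserts ``and so $X \tea Z$ and $Z' \aet Y$ in $\M$'' without justifying the tail at $X$, whereas you supply the case split on true vs.\ virtual adjacency that makes this explicit (which amounts to the observation that any two (virtually) adjacent nodes in $\G$ must have at least one ancestor relation between them, so edges in the CMAG never carry two arrowheads).
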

(For proof details for this and other results in the rest of this article, see supplement.)

Crucially, in the CMAG or CPAG we do not actually record the \textit{u}-structure explicitly. In fact, the only elements of a \textit{u}-structure that need to be oriented in the CPAG are the first and last edge \textit{into} the strongly connected component (cf. step (c) of the CPAG-from-Graph algorithm, \S\ref{sub:2.5-CPAGfromG}).

As a result, we do not  have to identify the full quadruple $\seq{X,Z,Z',Y}$ of each \textit{u}-structure, but only if an edge $X - Z$ is part of \textit{some} u-structure pattern. For that, we can rely on the following result:

\begin{lem} \label{lem:u-struct-undir}
In a CMAG $\M$, a pair of nodes $\seq{X,Z}$ is part of a u-structure $\seq{X,Z,Z',Y}$ with a node $Y \in \bfY \subseteq pa(SCC(Z)) \setminus adj(\{X,Z\})$, iff $X \in pa(Z)$, and $X$ and $Y$ are connected in the subgraph over $((SCC(Z) \setminus adj(X)) \cup \{X,Z\} \cup \bfY$.
\end{lem}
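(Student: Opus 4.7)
My plan is to prove the equivalence by path surgery in both directions, exploiting the fact that in a CMAG every edge internal to an SCC is undirected (tails on both sides), while any parent of the SCC from outside carries an arrowhead into the SCC.

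For the forward direction, suppose $\seq{X,Z,Z',Y}$ is a u-structure with $Y\in\bfY$. By Definition~\ref{dfn:u-struct} there is an uncovered path $\pi = X\tea Z\tet U_1\tet\cdots\tet U_k\tet Z'\aet Y$ in $\M$ whose internal vertices together with $Z'$ all lie in $SCC(Z)$. The initial arrow gives $X\in pa(Z)$. Because $\pi$ is uncovered, $X$ is non-adjacent to every $U_i$ and to $Z'$, so each of these vertices belongs to $SCC(Z)\setminus adj(X)$. Every vertex of $\pi$ thus lies in $(SCC(Z)\setminus adj(X))\cup\{X,Z\}\cup\bfY$, and the full path survives in the induced subgraph, witnessing connectivity.

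For the reverse direction, assume $X\in pa(Z)$ and that some $Y_0\in\bfY$ is reachable from $X$ in the specified subgraph. Let $\pi$ be a shortest path in this subgraph from $X$ to any vertex of $\bfY$, and let $Y$ denote its endpoint. I will argue that $\pi$ is exactly an uncovered u-structure path. First, the only neighbour of $X$ in the subgraph is $Z$, since vertices of $SCC(Z)\setminus adj(X)$ are non-adjacent to $X$ by construction and vertices of $\bfY$ are non-adjacent to $X$ by hypothesis; hence $\pi$ starts $X\tea Z$. Second, by minimality no interior vertex of $\pi$ can lie in $\bfY$ (else $\pi$ could be truncated), so every interior vertex, including the penultimate vertex $Z'$, lies in $SCC(Z)\setminus adj(X)$. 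Third, $\pi$ is uncovered in $\M$, because a chord would yield a strictly shorter path and adjacency in the induced subgraph agrees with adjacency in $\M$ on the relevant vertex set. Fourth, every edge of the internal stretch $Z\tet U_1\tet\cdots\tet U_k\tet Z'$ is undirected, as all its endpoints lie in $SCC(Z)$ and are thus mutual ancestors in $\G$. Finally, the terminal edge carries a tail at $Y$ (because $Y\in pa(SCC(Z))$ gives a directed path $Y\to W\to\cdots\to Z'$ through the SCC) and an arrowhead at $Z'$ (because $Y\notin SCC(Z)$, so $Z'$ is not an ancestor of $Y$ in $\G$). Collating these observations matches Definition~\ref{dfn:u-struct} exactly.

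The main obstacle I anticipate is the last substep above: certifying the arrowhead at $Z'$ requires $Y\notin SCC(Z)$. The stated hypothesis $\bfY\subseteq pa(SCC(Z))\setminus adj(\{X,Z\})$ does not, literally, exclude members of $SCC(Z)$ from $\bfY$, yet such a member could not serve as the terminal ``$Y$'' of a u-structure because the corresponding edge to $Z'$ would be undirected rather than an arrowhead. In the intended application I expect $\bfY$ is tacitly understood to exclude $SCC(Z)$; alternatively, if the shortest path happened to end at an internal-SCC element of $\bfY$, that element can be absorbed into the undirected segment and the argument continued until an external witness in $\bfY$ is reached. Modulo this clarification, steps (a)--(d) follow directly from minimality of $\pi$ and the CMAG tail/arrowhead rules, so the proof is in essence just the correct choice of shortest path.
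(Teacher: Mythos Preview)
Your argument is correct and follows essentially the same route as the paper's: both take a path in the induced subgraph, argue that the first step out of $X$ must be $X\tea Z$ (since every other node of the subgraph is non-adjacent to $X$), that the interior lies in $SCC(Z)$, and that $Z'\ne Z$ (since $\bfY$ avoids $adj(Z)$). The only cosmetic difference is that you obtain ``uncovered'' directly from minimality of a shortest path, whereas the paper takes an arbitrary path, truncates at the first $\bfY$-node, and then invokes its subsequence Lemma~\ref{lemCMAG_13} to pass to an uncovered subpath; you also spell out the forward direction, which the paper leaves implicit.

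Your flagged obstacle is not a real gap. In a CMAG every edge between two nodes of the same strongly connected component is undirected (each endpoint is an ancestor of the other in $\G$, hence carries a tail by Definition~\ref{dfn:CMAG}(ii)). Consequently no node of $SCC(Z)$ is a \emph{parent} in $\M$ of any other node of $SCC(Z)$, so $pa_\M(SCC(Z))\cap SCC(Z)=\emptyset$ automatically. The hypothesis $\bfY\subseteq pa(SCC(Z))$ therefore already forces $Y\notin SCC(Z)$, which delivers the arrowhead $Z'\aet Y$ you need without any extra assumption.
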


This significantly reduces the complexity of establishing Markov equivalence later on, as it means we only need to search over triples rather than quadruples in the CMAG. 
More importantly, it motivates the introduction of the following invariant element, which in turn will significantly simplify the CET.

\begin{dfn} \label{dfn:virtual-col-triple}
In a CMAG $\M$, a triple of distinct nodes $\seq{X,Z,Y}$ is a \textbf{virtual collider triple} iff $\seq{X,Z,Y}$ is a virtual \textit{v}-structure, or there is some $Z' \in SCC(Z)$, such that either $\seq{X,Z,Z',Y}$ or $\seq{X,Z',Z,Y}$ is a \textit{u}-structure.
\end{dfn}
Intuitively, a virtual collider triple $\seq{X,Z,Y}$ implies that $X$ and $Y$ are connected by an uncovered itinerary via nodes in $SCC(Z)$ that \textit{identifiably} contains one or more virtual edges. The strongly connected component of $Z$ fulfils the role of collider in $X \tea SCC(Z) \aet Y$, and the \textit{virtual} emphasises  there is no `real' collider triple $X \tea Z \aet Y$ in the underlying directed graph.

\begin{figure}[h]
  \centering
  \includegraphics[width=0.9\linewidth,page=1]{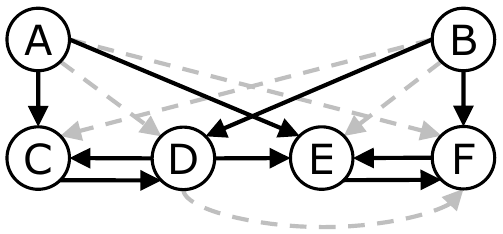}
  \caption{\small Example CET orientation rule (iv) on virtual collider triples $\seq{A,D,B}$ and $\seq{A,E,B}$ for invariant edge $D \tea E$, with virtual edges as dashed grey arcs.} 
  \label{fig3CET-4}
\end{figure}

\subsection{A new CET}
We are now ready to restate the Cyclic Equivalence Theorem in terms of CMAGs:

\begin{thm} \label{thm:CET-CMAG}
Two directed graphs $\G_1$ and $\G_2$, corresponding to CMAGs $\M_1$ and $\M_2$, are Markov (\textit{d}-separation) equivalent iff
\begin{enumerate}  
\item[(i)] $\M_1$ and $\M_2$ have the same skeleton,
\item[(ii)] $\M_1$ and $\M_2$ have the same \textit{v}-structures,
\item[(iii)] $\M_1$ and $\M_2$ have the same virtual collider triples,
\item[(iv)] if $\seq{A,B,C}$ is a virtual collider triple, and $\seq{A,D,C}$ a virtual \textit{v}-structure, then $B$ is an ancestor of $D$ in $\M_1$ iff $B$ is an ancestor of $D$ in $\M_2$.\footnote{In the original published version, $\seq{A,D,C}$ was erroneously included as a virtual collider triple, but the distinction is needed to restrict the pairs of virtual collider triples to check.}
\end{enumerate}
In this case, we call $\M_1$ and $\M_2$ `Markov equivalent'.
\end{thm}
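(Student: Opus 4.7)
The plan is to derive Theorem 1 by reducing it to the original Cyclic Equivalence Theorem of \cite{Richardson1997}, translating each of its five clauses into statements about the CMAG via Definitions \ref{dfn:CMAG}--\ref{dfn:virtual-col-triple} and Lemmas \ref{lem:me=ustruct}--\ref{lem:u-struct-undir}. I would set up a dictionary between graph features in $\G$ and ancestral features in $\M$, then verify that new (i)--(iv) are jointly equivalent to old (i)--(v).

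First, Definition \ref{dfn:CMAG}(i) gives that the edges of $\M$ are exactly the (virtual) adjacencies of $\G$, matching new (i) with old (i). By Definition \ref{dfn:CMAG}(ii,iii), each unshielded triple $\seq{A,B,C}$ in the common skeleton falls into exactly one of three mutually exclusive classes: an unshielded non-collider (tail at $B$ on at least one side) corresponds to an unshielded conductor in $\G$; a plain $v$-structure corresponds to an unshielded perfect nonconductor; and a virtual $v$-structure corresponds to an unshielded imperfect nonconductor. Hence, given new (i), condition (ii) together with the virtual-$v$-structure content of new (iii) classifies every unshielded triple identically in $\M_1$ and $\M_2$, which is precisely the content of old (ii.a) and (ii.b).

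Next, for the nonlocal clause old (iii), I would invoke Lemma \ref{lem:me=ustruct}: m.e.\ conductor pairs on an uncovered itinerary in $\G$ are in bijection with $u$-structures in $\M$. By Definition \ref{dfn:virtual-col-triple}, a virtual collider triple $\seq{X,Z,Y}$ records exactly the presence of such a $u$-structure read from the wing at $X$, with the mirror triple $\seq{Y,Z',X}$ recording it from the wing at $Y$. Reading virtual collider triples at both wings and using Lemma \ref{lem:u-struct-undir} to reconstruct the required within-SCC connectivity from the shared skeleton, one shows that sameness of virtual collider triples in $\M_1$ and $\M_2$ implies sameness of m.e.\ conductor pairs in $\G_1$ and $\G_2$, and conversely. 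Finally, I would split new (iv) by the type of the virtual collider triple $\seq{A,B,C}$: if it is itself a virtual $v$-structure, the rule reduces to old (iv); if it arises from a $u$-structure $\seq{A,B,Z',C}$ or $\seq{A,Z',B,C}$, then via Lemma \ref{lem:me=ustruct} the rule reduces to old (v), which asks exactly whether the second node of an m.e.\ conductor is an ancestor of the middle of a virtual $v$-structure with the same wings. Combining the two subcases shows new (iv) is equivalent to old (iv) and (v) under the earlier clauses, completing the reduction.

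The main obstacle I anticipate is clause (iii): a virtual collider triple $\seq{X,Z,Y}$ records only a triple whereas an m.e.\ conductor pair or a $u$-structure is genuinely a quadruple, and different internal SCC witnesses $Z'$ in $\M_1$ versus $\M_2$ might realise the same triple-level record. I would resolve this by appealing to Lemma \ref{lem:u-struct-undir}, which reduces existence of a $u$-structure at wing $Y$ to an undirected-connectivity question inside $SCC(Z)$ whose positive answer is determined by the skeleton and the invariant ``arrow into SCC'' marks shared by $\M_1$ and $\M_2$. Thus the existential witness $Z'$ is immaterial, and Markov equivalence is captured exactly by the triple-level record demanded in new (iii).
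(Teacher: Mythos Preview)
Your proposal is correct and follows essentially the same route as the paper: both reduce to Richardson's original CET by translating $\G$-features into $\M$-features (edges $\leftrightarrow$ (virtual) adjacencies, $v$-structures $\leftrightarrow$ unshielded perfect nonconductors, virtual $v$-structures $\leftrightarrow$ unshielded imperfect nonconductors, $u$-structures $\leftrightarrow$ m.e.\ conductor pairs via Lemma~\ref{lem:me=ustruct}), and both dispatch new~(iv) by a case split on whether $\seq{A,B,C}$ is a virtual $v$-structure (yielding old~(iv)) or arises from a $u$-structure (yielding old~(v)). The paper additionally states a small auxiliary lemma making this dichotomy exhaustive and mutually exclusive, and handles your anticipated triple-versus-quadruple mismatch at the same level of detail you propose.
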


Each rule in this ancestral CET can be linked directly to specific invariant elements in the CPAG: rule (i) to the edges in the CPAG, (ii) to all \textit{v}-structures, (iii) to remaining invariant arcs into strongly connected components (incl. under-dashed marks for virtual \textit{v}-structures), and (iv) to invariant edges within or between (identifiable) cycles. 
 
Comparing to the original CET in $\S$\ref{sub:orgCET}, we can see that the ancestral formulation greatly simplifies the Markov equivalence characterization, leading to two fewer rules and only requiring (collider) triples.

An interesting observation is that in the acyclic case going from DAGs to MAGs (to allow for unobserved confounders) implied going from `(unshielded) collider triples' (\textit{v}-structures) to `collider triples with order' in the characterization of Markov equivalence between graphs \citep{Ali++2009,ClaassenB2022}. Given that analogy we conjecture that for the cyclic case allowing for latent confounders can similarly be accomplished by extending to `(virtual) collider triples with order'. 

\section{Establishing Markov Equivalence for cyclic graphs} 
\label{sec:4-MarkovEq}
We now show that with the intermediate CMAG representation we can derive a consistent CPAG that uniquely defines the equivalence class of a (cyclic) directed graph without the need for any \textit{d}-separation tests. 
The resulting algorithm is extremely fast, and allows to determine Markov equivalence between graphs by directly comparing the output CPAGs.

\subsection{Obtaining the CMAG}

To capture the first rule of the new CET, we need to obtain the skeleton of the CPAG. To avoid the \textit{d}-separation tests in step (a) of the CPAG-from-Graph algorithm in $\S$\ref{sub:2.5-CPAGfromG}, we can use the following result:

\begin{lem} \label{lem:skeleton}
In a CMAG $\M$ corresponding to directed graph $\G$, two variables $X$ and $Y$ are adjacent, iff $X$ and $Y$ are (virtually) adjacent in $\G$.
\end{lem}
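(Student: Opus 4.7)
The plan is to unfold the CMAG definition and reduce to a statement purely about $d$-separation in $\G$. By Definition~\ref{dfn:CMAG}(i), two nodes $X,Y$ are adjacent in $\M$ iff no subset $\bfZ \subseteq \bfV \setminus \{X,Y\}$ $d$-separates them in $\G$. So the lemma amounts to: $X$ and $Y$ are (virtually) adjacent in $\G$ iff they have no $d$-separator in $\G$. I would prove the two directions separately.

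For the easy direction ($\Leftarrow$), if $X$ and $Y$ are adjacent in $\G$, the trivial one-edge path is active for every $\bfZ$. If they are virtually adjacent but not adjacent, by Definition~\ref{dfn:virtual-edge} they share a common child $C$ that is an ancestor of (say) $X$, witnessed by a directed path $C \tea U_1 \tea \dots \tea U_{k-1} \tea X$. For any $\bfZ \subseteq \bfV \setminus \{X,Y\}$, I split into two cases. If $\bfZ$ contains at least one node from $\{C, U_1, \dots, U_{k-1}\}$, then on the collider path $X \tea C \aet Y$ the collider $C$ has a descendant (or itself) in $\bfZ$, so the path is active. Otherwise $\bfZ$ avoids the entire directed path, and then $Y \tea C \tea U_1 \tea \dots \tea U_{k-1} \tea X$ is a directed path from $Y$ to $X$ with no node in $\bfZ$, hence active. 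So no $\bfZ$ $d$-separates $X$ and $Y$.

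For the harder direction ($\Rightarrow$), I argue the contrapositive: if $X$ and $Y$ are neither adjacent nor virtually adjacent in $\G$, then the set $\bfZ := \an_{\G}(\{X,Y\}) \setminus \{X,Y\}$ $d$-separates them (this is precisely the separating set used in step (a) of the CPAG-from-Graph algorithm of \S\ref{sub:2.5-CPAGfromG}). Assuming for contradiction an active path $\pi$ from $X$ to $Y$ given $\bfZ$, choose $\pi$ to be shortest. Every non-collider on $\pi$ is outside $\bfZ$ and every collider is in $\an_{\G}(\bfZ \cup \{X,Y\}) = \an_{\G}(\{X,Y\})$. A careful analysis of the first intermediate vertex $W$ on $\pi$ then forces a contradiction: if $W$ is a non-collider it lies outside $\bfZ$, yet the edge $X \mea W$ makes $W$ a descendant of $X$ (or ancestor of $X$ via the tail-outgoing portion of $\pi$), and tracing back shows $W$ must actually sit in $\an_\G(\{X,Y\})$, forcing $W \in \bfZ$ unless $W \in \{X,Y\}$ — the latter yielding adjacency. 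If $W$ is a collider, then $W$ must have a directed descendant reaching $\{X,Y\}$, which combined with the edge into $W$ from $X$ and from the $Y$-side of $\pi$ produces a common child of $X$ and $Y$ ancestrally linked back to one of them, i.e.\ a virtual adjacency. Either outcome contradicts the hypothesis.

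The main obstacle will be the last direction: turning an arbitrary shortest active path into either a direct edge or the specific common-child-with-ancestral-return configuration required by Definition~\ref{dfn:virtual-edge}. The combinatorial inductive argument on the length of $\pi$ (or equivalently on the number of intermediate collider segments) is the substantive step; for the write-up I would appeal to the established $d$-separation completeness of the ancestor separating set for cyclic graphs, referencing \citep{Richardson1996c_DCCS}, and only detail the single edge-case that reduces non-separability to (virtual) adjacency.
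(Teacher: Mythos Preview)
Your approach matches the paper's, which gives no independent argument at all: its entire proof is the one-line reference ``This is Lemma~1 in \citep{Richardson1997}.'' You arrive at the same place---identifying $\an_\G(\{X,Y\})\setminus\{X,Y\}$ as the canonical separator and deferring the combinatorics to Richardson---while additionally spelling out the easy ($\Leftarrow$) direction, which is correct as written.

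One caution on your sketch of the ($\Rightarrow$) direction: when the first intermediate node $W$ on a shortest active path is a collider, the incoming edges make $W$ a common child of $X$ and the \emph{next} node on $\pi$, not of $X$ and $Y$; so ``produces a common child of $X$ and $Y$'' does not follow from that local picture alone. Extracting an honest common child of $X$ and $Y$ with the required ancestral return needs the full inductive reduction along $\pi$ (iteratively shortcutting through colliders that are ancestors of $\{X,Y\}$), which is exactly the content you plan to cite from \citep{Richardson1996c_DCCS,Richardson1997}. Since you already flag this as the step to outsource, the proposal stands.
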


It implies we can read off the CMAG skeleton directly from the graph $\G$, by starting from the skeleton of $\G$, and adding an edge between $X$ and $Y$ for every \textit{v}-structure $X \tea Z \aet Y$ in $\G$ with $Y \in SCC_{\G}(Z)$.

It does mean that we first need to partition the vertices in the graph into the set of strongly connected components. This can be achieved in time linear in the number of vertices and edges $O(N d)$ using e.g. Tarjan's algorithm \citep{Tarjan1972}.\footnote{Actually, we use a modified version of Tarjan's algorithm that also tracks ancestral relations in one go. For details on this and all other algorithms used in the paper, see source code available at \url{https://github.com/tomc-ghub/CET_uai2023}}

Subsequent orientations of edges in $\M$ follow orientations in $\G$, where edges between nodes in the same $SCC$ become undirected edges, signifying they are all ancestor of each other.
Induced edges between nodes in the same cycle also become undirected, and induced edges by a triple $X \tea Z \aet Y$ in $\G$ with $X \notin SCC_{\G}(Z)$ become $X \tea Y$.

Alternatively, we can process each node $X$ in $\G$ in turn, and draw undirected edges between all of its parents in the same cycle as $X$ (incl.\ $X$)  in $\M$, and add arcs from all remaining parents into the first set of parents (again incl.\ $X$), which is what we do in Algorithm 1, below.

\begin{algorithm}[h]
  \caption{Graph-to-CMAG}   \label{alg:CG-to-CMAG}
\begin{algorithmic}		
   \STATE{\bfseries Input:} directed cyclic graph $\G$ over nodes $\bfV$
   \STATE{\bfseries Output:} CMAG $\M$, $SCC$s, 
   \STATE $SCC \gets Get\_StronglyConnComps(\G)$
   \STATE  \textit{part 1: CMAG rules (i) + (ii)}
   \FORALL{$X \in \bfV$}
   \STATE $\bfZ \gets pa_{\G}(X)$
   \STATE $\bfZ_{cyc} \gets \bfZ \cap SCC(X)$
   \STATE $\bfZ_{acy} \gets \bfZ \setminus \bfZ_{cyc}$
   \STATE add all arcs $\bfZ_{acy} \tea \bfZ_{cyc} \cup \{X\}$ to $\M$
   \STATE add all undirected edges $\bfZ_{cyc} \tet \bfZ_{cyc} \cup \{X\}$ to $\M$
   \ENDFOR
   \STATE  \textit{part 2: CMAG rule (iii)}
   \FORALL{$X \in \bfV: |SCC(X)|\geq 2$}
   \STATE $\bfZ \gets pa_{\M}(X)$
   \FORALL{non-adjacent pairs $\{Z_i,Z_j\} \subseteq \bfZ$}
	\STATE \textbf{if}   $\{Z_i,Z_j\} \nsubseteq adj_{\G}(X)$ \textbf{then}
	\STATE \textbf{~~~if}   $X \notin de_{\G}(ch_{\G}(Z_i) \cap ch_{\G}(Z_j))$  \textbf{then}
    \STATE ~~~~~~mark virtual v-structure $\seq{Z_i,X,Z_j}$ in $\M$
   \ENDFOR
   \ENDFOR
\end{algorithmic}
\end{algorithm}

The second part of Algorithm \ref{alg:CG-to-CMAG} simply involves checking all \textit{v}-structures in $\M$ with central collider node in a non-trivial SCC, and with at least one virtual edge in $\G$. Here we use the matrix of ancestral relations, constructed when identifying the SCCs at the start of the algorithm, to reduce the `descendant of' check in the second `if'-clause to constant time per node.

\subsection{Constructing the CPAG}

Before we can go on to construct a CPAG from the CMAG $\M$ obtained above, we still need to recognise the virtual collider triples corresponding to so-called \textit{u}-structures. These are not marked explicitly in the CMAG (contrary to virtual \textit{v}-structures), but they \textit{are} needed to orient certain invariant edges in the CPAG corresponding to rules (iii) and (iv) in Theorem \ref{thm:CET-CMAG}. Fortunately, for that we can rely on Lemma \ref{lem:u-struct-undir}, where the fact that we only need to consider straightforward `connected subgraphs' means the complexity of this step scales linearly with the number of edges in the subgraph.

It also means that, in the construction of the CPAG, to cover invariant arcs from \textit{u}-structures, we only need to consider edges $X \tea Z$ in $\M$ that are not yet oriented in $\cP$, and where $Z$ is part of a nontrivial SCC (size $|SCC(Z)| \geq 2$), and the $\bfY$ in Lemma \ref{lem:u-struct-undir} are all other parents of $SCC(Z)$ that are not adjacent to $X$ and/or $Z$ in $\M$.
 Note that the arcs oriented thusly were previously captured by the exhaustive search in step (c) of the CPAG-from-Graph algorithm in section \ref{sub:2.5-CPAGfromG}.  

We can now bring these steps together in Algorithm \ref{alg:G-to-CPAG}.\footnote{The second clause in the `if' statement in part 2 was added as a result of the correction to CET rule (iv).}

\begin{algorithm}[h]
  \caption{Graph-to-CPAG}   \label{alg:G-to-CPAG}
\begin{algorithmic}		
   \STATE{\bfseries Input:} directed cyclic graph $\G$ over nodes $\bfV$, 
   \STATE{\bfseries Output:} CPAG $\cP$, 
   \STATE $(\M,SCC) \gets$ \mbox{Graph-to-CMAG}$(\G)$ 
   \STATE \textit{part 1: new-CET rules (i)-(iii)}
   \STATE $\cP \gets$ skeleton of $\M$ with all $\cec$ edges
   \STATE $\cP \gets$ copy all (virtual) \textit{v}-structures from $\M$ 
   \FORALL{$X \cec Z$ in $\cP$, $X \tea Z$ in $\M$, $|SCC(Z)| \geq 2$}
   \STATE \textbf{if}   $\exists \seq{X,Z,Z',Y}$ as \textit{u}-structure in $\M$ \textbf{then}
   \STATE ~~~orient $X \tea Z$ in $\cP$ ~~~~~~~~~~~~~~~ \COMMENT{\textit{Lemma \ref{lem:u-struct-undir}}}
   \ENDFOR
   \STATE \textit{part 2: new-CET rule (iv)}
   \FORALL{$Z \cec W$ at virtual \textit{v}-structures $\seq{X,Z,Y}$}
   \STATE \textbf{if}   $\seq{X,W,Y}$ is virtual collider triple \textbf{or}\\
   ~~~ $\exists\seq{X,B,Y}$ as virt.\ coll.\ triple, with $B \notin an(Z)$,\\
        ~~~ uncovered $B \tem .. \tem W \aet Z$ in $\M$, and \\
        ~~~ $\nexists U:\{$\textit{v}-structure $X \tea U \aet Y$ in $\M, W \in de(U)\}$ 
   \STATE \textbf{then} copy edge $Z \mem W$ from $\M$ to $\cP$ 
   \ENDFOR
\end{algorithmic}
\end{algorithm}

In practice we already copy invariant features to the CPAG while constructing the CMAG to improve efficiency. Note that the final output CPAG is \textit{d}-separation complete, but \textit{not} guaranteed to be identical to the CPAG from the original CPAG-from-Graph algorithm. This is because steps (c) and (f) there contain an exhaustive search that also orients certain arcs that are sound but not needed for the CET, but could also be obtained from subsequent implied orientation rules, similar to the PC/FCI algorithm. Therefore the CPAGs from the two algorithms cannot be compared directly against each other to establish Markov equivalence. However the main result remains the same: 

\begin{thm}
For two different directed graphs $\G_1$ and $\G_2$, let $\cP_1$ and $\cP_2$ be the corresponding CPAGs output by algorithm \ref{alg:G-to-CPAG}. Then $\G_1$ is Markov (\textit{d}-separation) equivalent to $\G_2$ iff $\cP_1 = \cP_2$.
\end{thm}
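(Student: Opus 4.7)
My plan is to prove the theorem by invoking the new CET (Theorem 1) as the complete characterization of Markov equivalence, and showing that Algorithm 2 produces a CPAG that uniquely encodes the four invariant features listed there. Since Algorithm 2 first calls Algorithm 1 (Graph-to-CMAG) and then orients additional edges, the proof reduces to verifying that each step of the combined procedure captures, in both a sound and complete way, exactly one of the four items in Theorem 1.

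For the forward direction ($\G_1 \equiv \G_2 \Rightarrow \cP_1 = \cP_2$), I would trace through both algorithms in sequence. Lemma 3 ensures that Algorithm 1 produces the same CMAG skeleton for both graphs, matching rule (i). The v-structures and the virtual v-structures are detected from $\G$ via v-structures and the common-child test in part 2 of Algorithm 1, matching rule (ii) and the virtual-v-structure portion of rule (iii); these are invariant across the Markov equivalence class by Theorem 1. In Algorithm 2, part 1 copies these features into $\cP$ and then orients $X \tea Z$ in the u-structure loop whenever Lemma 2 detects the starting edge of a u-structure. By Lemma 1, such u-structures correspond to virtual collider triples, again invariant by Theorem 1(iii). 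Finally, part 2 encodes rule (iv). Since every step uses only features invariant under Markov equivalence, the two CPAGs must coincide.

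For the reverse direction ($\cP_1 = \cP_2 \Rightarrow \G_1 \equiv \G_2$), I would recover each of the four invariants of Theorem 1 from the shared CPAG. The skeleton of $\cP$ equals the skeleton of the underlying CMAG by construction, so rule (i) follows. The v-structures and virtual v-structures are copied directly into $\cP$, giving rule (ii) and the virtual-v-structure part of rule (iii). For the u-structure portion of rule (iii), equality of the arcs oriented by the u-structure loop tells us that $\seq{X,Z}$ is the start of some u-structure in $\M_1$ iff it is in $\M_2$; combined with Lemma 2 and the shared skeleton (together with the copied (virtual) v-structures, which identify the strongly connected structure implicit in $\cP$), this yields that the full set of virtual collider triples agrees between $\M_1$ and $\M_2$. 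Equal orientations from part 2 give the rule (iv) ancestral relations. Applying Theorem 1 then gives $\G_1 \equiv \G_2$.

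The main obstacle will be the reverse-direction argument for rule (iii). The CPAG records u-structures implicitly and asymmetrically: an invariant arc $X \tea Z$ merely signals that $\seq{X,Z}$ begins some u-structure, without naming the completing endpoint $Y$. I would need to argue carefully that this partial information, combined with the shared skeleton, shared (virtual) v-structures, and the SCC structure extractable from $\cP$, suffices to reconstruct the full set of virtual collider triples in either CMAG. This is the crux of why the CPAG is a complete invariant despite encoding less than the full CMAG, and it is precisely where Lemma 2 is leveraged to reduce u-structure detection to a purely local connectivity query inside $SCC(Z)$ and its parents.
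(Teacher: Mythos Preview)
Your overall decomposition into the four CET items is natural, but both directions have genuine gaps compared with how the paper actually argues.

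\textbf{Forward direction.} You assert that ``part 2 encodes rule (iv)'' and hence uses only Markov-invariant features. This is where the real work lies, and you skip it. The conditions tested in part 2 of Algorithm 2 are \emph{not} literally the rule-(iv) condition ``$B \in an(D)$'': they involve the existence of a specific uncovered path $B' \tem \ldots \tem W \aet Z$ in the CMAG, the non-ancestor relation $B \notin an(Z)$, and the non-existence of a $v$-structure $\seq{X,U,Y}$ with $W \in de(U)$. None of these is an item in Theorem 1, so their invariance under Markov equivalence is not immediate. The paper proves two dedicated lemmas (its Lemmas \ref{lemExistW-D} and \ref{lemV-V_inv}, building on Lemmas \ref{lemCMAG_InvAncPath} and \ref{lemCMAG_InvDescVstruct}) showing that whenever part 2 fires in $\M_1$, it fires on a pattern that is \emph{identical} in every Markov-equivalent CMAG. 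Without this, your forward direction does not close.

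\textbf{Reverse direction.} Your strategy---read the four CET invariants back out of the shared CPAG---is structurally different from the paper's, and the step you flag as ``the main obstacle'' does not go through as sketched. The CPAG does \emph{not} record which arrowheads came from the $u$-structure loop versus from (virtual) $v$-structures or from part 2, and it does not encode the SCC partition (edges inside an SCC may carry circle marks, but so may edges between distinct SCCs). So ``the SCC structure extractable from $\cP$'' is not available, and Lemma 2, which operates on the CMAG, cannot be applied to $\cP$. Similarly, part 2 only orients certain edges at virtual $v$-structures; it does not record, for every pair $(B,D)$ satisfying the hypotheses of rule (iv), whether $B \in an(D)$, so you cannot simply ``read off'' rule (iv) from equal part-2 orientations. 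The paper avoids this entirely by arguing the contrapositive: assuming $\M_1$ and $\M_2$ agree on CET(i)--(iii) but not on (iv), its Lemma \ref{lemCMAG_Thm2} locates a specific edge $X_k \mem D'$ at a virtual $v$-structure along a particular uncovered path, and shows that part 2 must orient this edge as $X_k \aet D'$ in one CPAG but not in the other. That localisation lemma is the missing idea in your reverse direction.
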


\subsection{Computational complexity} \label{sub:complexity}
The scaling behaviour of Algorithm \ref{alg:G-to-CPAG} depends primarily on the number of vertices $N$ and average node degree $d$ corresponding to $N*d$ edges in the graph. 

The first part of algorithm \ref{alg:CG-to-CMAG} requires order $O(N + N*d)$ steps to find the strongly connected components, followed by a loop over $N$ vertices comparing $d^2$ parents, so overall $O(N*d^2)$.
Similarly, the second part of algorithm \ref{alg:CG-to-CMAG} considers $d^2$ parents for $N$ nodes, checking it is not a descendant of $d$ possible common children for $O(N*d^3)$ (provided the $Get\_StronglyConnComps$ step also tracks the ancestral matrix for constant-time descendant checks).

Next, the first two steps in part 1 of algorithm \ref{alg:G-to-CPAG}, initializing the skeleton and (virtual) v-structures, are also $O(N*d^2)$.
Next, for the \textit{u}-structures we may need to loop over $O(N*d)$ edges and establish connectedness in a subgraph over at most $N$ nodes, which can be done in order $O(N*d)$ steps (similar to the SCC procedure) leading to overall $O(N^2*d^2)$.
Finally, in part 2 of algorithm \ref{alg:G-to-CPAG} we need to loop over $N*d$ edges to virtual \textit{v}-structures, considering $d^2$ other virtual collider triples (previously identified in part 1), each with links to $d$ candidate nodes $B$, followed by a (single) test for connectedness per edge, order $O(N*d)$, giving a total of $O(N*d*(d^3 + N*d)$.

So overall worst case complexity scales with $O(N^5)$ for arbitrary density, which is a significant improvement over the $O(N^7)$ achieved by the current state-of-the-art CPAG-from-Graph algorithm.

In practice, even for large graphs there is typically only a relatively small number of cases to consider in the final steps, and so for both procedures the actual scaling behaviour is usually much better than this worst-case bound suggests, as evidenced by the next section.

\section{Experimental evaluation} \label{sec:5-ExpEval}
In order to evaluate the performance of the CPAG-from-graph procedure as a function of size and density of the graph we generate collections of random directed cyclic graphs and track both average and worst-case performance in terms of number of elementary operations and time.

Note that in generating the random cyclic graphs we introduced a few parameters to be able to tweak the number and type of cycles included, as for increasing size and density truly random cyclic graphs quickly tend to collapse into the `one big cycle' type, avoiding most of the intricacies from CET rules (iv) and (v) that relate to invariate edges between cycles; see section 1.1 in the supplement for details.

\subsection{Scaling behaviour}

\begin{figure}[h]
  \centering
  \includegraphics[width=1.0\linewidth]{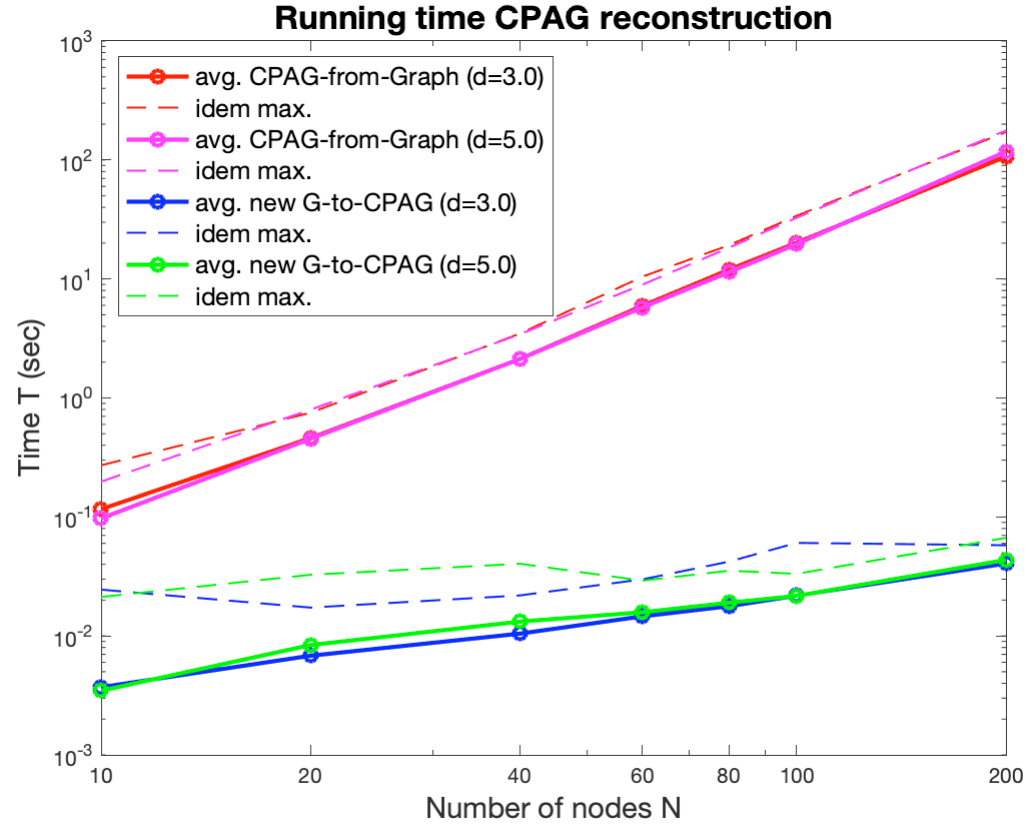}
  \caption{\small Log-log plot depicting scaling behaviour of original (red/magenta) and new CPAG algorithms (blue/green), as a function of size of the graph $N$, for two different densities $d\in \{3.0,5.0\}$. Solid lines indicate average performance over 100 instances, dashed lines the worst case encountered.} 
  \label{fig4PerfResults}
\end{figure}

Figure \ref{fig4PerfResults} shows the results for the two CPAG-from-graph procedures. As expected, the scaling behaviour of the new procedure in Algorithm \ref{alg:G-to-CPAG} is much more benign. 
As a result, for graphs of $N=200$ nodes with density $d=3.0$, the latter requires only about $0.05$ sec. on average to construct the CPAG, whereas the original version takes about $78$ sec.: a speed-up by 3 orders of magnitude.

In the supplement we see that the original CPAG-from-Graph procedure spends the vast majority of its time in the expensive \textit{d}-separation searches in stage (a) and (c), whereas for sparse graphs the new Graph-to-CPAG version spends roughly equal amounts in each phase. For denser graphs, the final stage in the latter starts to dominate, as expected from the complexity analysis in section \ref{sub:complexity}.

Finally, note that for both algorithms in this experiment there is not much difference between average and worst-case scaling behaviour in the collection of randomly sampled graphs (around $1.5-2.0$ times more expensive for both versions), and both stay well below their theoretical worst-case limits.
The reason is that, in order to reach the dreaded `worst case' scenario, the graphs require very specific configurations that are extremely unlikely to occur in truly random graphs. As a result, a reassuring message of Figure \ref{fig4PerfResults} is that in practice the challenge of handling even (very) large cyclic directed graphs is likely to remain feasible in practice, despite the quite imposing theoretical worst-case limit.

\section{Discussion}

We presented a new, ancestral perspective on the Cyclic Equivalence Theorem for directed graphs that resulted in a fast and efficient procedure to obtain the CPAG from an arbitrary directed graph.

The resulting CPAGs can be compared directly to establish Markov equivalence between cyclic directed graphs, but so far we made no attempt to derive \textit{all} invariant features shared by all (and only) the CMAGs in the same equivalence class. In other words, we did not yet aim for the \textit{maximally informative} CPAG. As a result, not all identifiable cycles are guaranteed to appear in an easily recognisable form. Squeezing out all available information would likely entail a set of additional orientation propagation rules, similar to augmented FCI in \citep{Zhang2008}. 

The obtained efficiency of the Graph-to-CPAG procedure in algorithm \ref{alg:G-to-CPAG} also means it is fast enough to be a viable route for extending score-based greedy equivalence search algorithms like GES \citep{Chickering2002} towards cyclic graphs, similar to recent extensions for acyclic graphs in the presence of confounders \citep{ClaassenB2022}. 

However, we consider the most promising aspect of our results the significantly reduced conceptual complexity provided by the ancestral perspective. The new ancestral CET is notably simpler than the original version, and suggests a natural extension to cyclic models with confounders, analogous to that for MAGs.

Finally, the CMAG under \textit{d}-separation treats strongly connected components more similar to the nonlinear case under $\sigma$-separation \citep{MooijC2020}, which suggests they may be merged to handle arbitrary cyclic relationships in the near future. We hope this may encourage researchers to renew work towards extending available constraint-based algorithms towards sound and complete causal discovery in the presence of confounders, cycles, and selection bias.

\bibliography{uai2023-newCET}

\newpage


\title{Supplement - Establishing Markov Equivalence in Cyclic Directed Graphs}
\maketitle

\begin{abstract}
This part contains the revised version of the supplement to the original UAI2023 publication `Establishing Markov Equivalence in Cyclic Directed Graphs'. It includes the correction to rule (iv) in Theorem 1 and the subsequent adjustment in Algorithm 2, as well as a number of extensions that should make the proofs essentially self-contained. Numbering and notations follow the main article.
\end{abstract}

\section{Additional experimental results} \label{sec:7-ExpEval}
This section elaborates on the random cyclic graph generating process, and a result that offers some added insight into the inner workings of the two CPAG algorithms.

\subsection{Generating random cyclic graphs}
In contrast to the familiar acyclic graphs, in cyclic graphs there can be \textit{two} edges between each pair of nodes, corresponding to a total of $N(N-1)$ possible directed edges for graphs over $N$ nodes. However, in both the Erdos-Renyi model (all graphs with $n$ edges equally likely) and the Gilbert model (all edges appear with equal probability $p$), as density or size of the graph increases, the resulting graph is overwhelmingly likely to contain just one, big strongly connected component, with only a few other nodes on its periphery. As a key part of the CET is about invariant edges \textit{between} components in rule (iv) (see e.g.\ Figure 3 in the main article), just evaluating on arbitrary random graphs would likely lead to an incomplete or biased perspective. In addition, a number of challenges in finding the correct CPAG are related to sequences of connected two-cycles (see main, Figure 2), which in larger fully random graphs are also exceedingly unlikely to appear.

Therefore we tweak the random graph generating process to allow some control over the number and size of the strongly connected components. We introduce a 3-stage process parameterized by size $N$ and density $d$, as well as parameters $p_{two}$ for the proportion of two-cycles, and $p_{acy}$ and $p_{cyc}$ for the proportion of recursive resp.\ nonrecursive edges that remain:
\begin{enumerate}
\item randomly sample the required number of two-cycles, 
\item add random arcs from lower to higher numbered nodes,
\item add completely random arcs for the remaining edges.
\end{enumerate}
Afterwards a random permutation of the nodes is applied to ensure there is no implicit bias in the ordering.

With this procedure, setting $[p_{two},p_{acy}, p_{cyc}] = [0,1,0]$ would lead to a random acyclic graph, whereas setting $[0.1,0.9,0]$ would lead to a random acyclic graph with some edges turned into two-cycles. Setting $[0,0,1]$ would lead to a fully random cyclic graph in the Erdos-Renyi model.
In practice setting e.g.\ $[p_{two},p_{acy}, p_{cyc}] = [0.1,0.82,0.08]$ leads to a varied number and size of the strongly connected components for graphs of up to $N=200$ nodes with density $d=3.0$. For $N=200$ this leads on average to about 11 nontrivial strongly connected components with average largest component size of about 17 vertices.

For larger/higher density graphs the $p_{cyc}$ proportion should be reduced to avoid collapsing into the `one big cycle' trap. In our experiments for $d=5.0$ we used $[p_{two},p_{acy}, p_{cyc}] = [0.05,0.93,0.02]$, which, for $N=200$ resulted on average in about 5 nontrivial strongly connected components, with an average largest size of about 70 vertices.

Additional implementation details will be published with the accompanying source code.

\subsection{Relative time spent per stage}
To take a closer look at the relative contribution of each stage in the two different CPAG procedures to the overall time complexity we also timed each stage separately. Average results are depicted below.

\begin{figure}[h]
  \centering
  \includegraphics[width=1.0\linewidth]{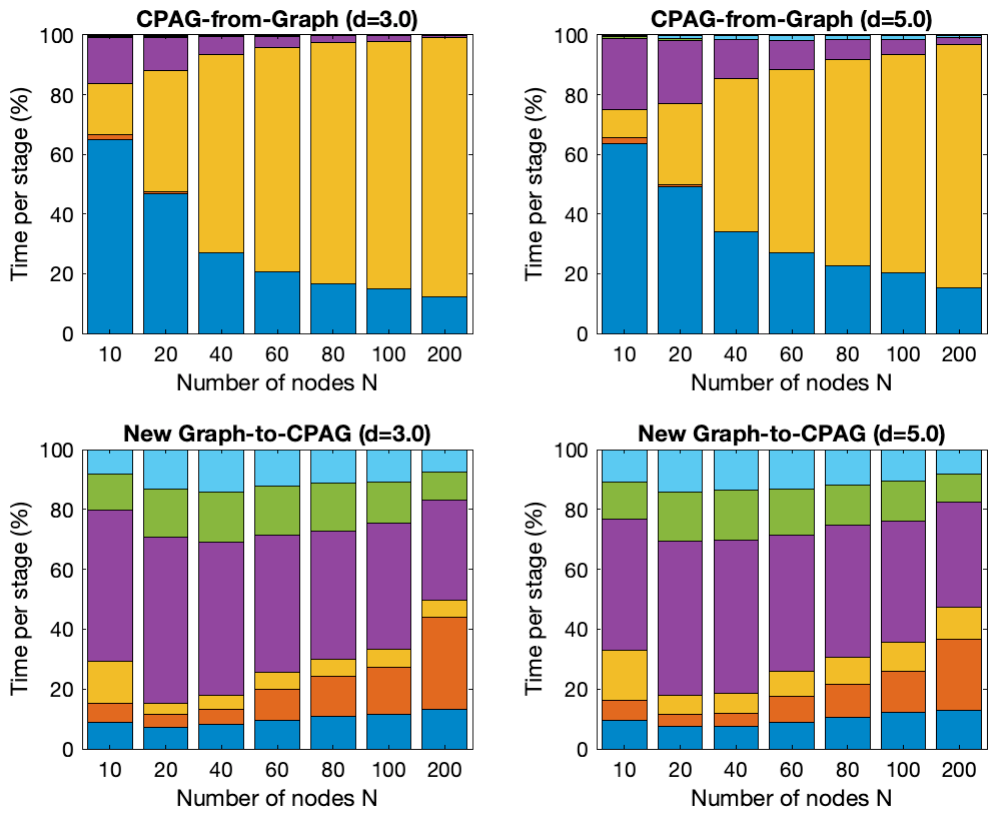}
\renewcommand\thefigure{5}	
  \caption{\small Plots depicting the relative proportion each algorithm spends on average in the different stages, as a function of the size of the graph $N$, for two different densities $d \in \{3.0, 5.0\}$. Stages are ordered bottom up, i.e.\ first stage on the x-axis, second stage on top of that etc.} 
  \label{fig5PerfResults}
\end{figure}

We see that the original CPAG-from-Graph procedure spends the vast majority of its time in the expensive \textit{d}-separation searches in stage (a) (blue) and (c) (yellow), whereas the new Graph-to-CPAG version spends roughly constant amounts in each phase. For denser graphs, the final stage (cyan) in the latter is theoretically the most expensive worst case, but remains at nearly constant proportion in practice, as it is extremely rare to encounter such instances in arbitrary random graphs.

The correction to rule (iv) in Theorem 1 resulted in an additional clause in the `if' statement in part 2 of the Graph-to-CPAG algorithm, but also allowed for more efficient filtering of candidate edges to check. As a result, the implementation now spends a bit more time preparing in the 4th stage (purple), but significantly less in the final stage (cyan), making the new implementation as a whole scale slightly better than the original version (about twice as fast for $N=200$).

\section{Proof details}\label{sec:8-Proofs}
\newtheorem{lemma}{Lemma}
\setcounter{lemma}{3}  
First a few results on properties of \textit{ancestral paths} in CMAGs, i.e.\ paths of the form $X_1 \tem .. \tem X_n$, so that every vertex $X_i$ is ancestor of all $X_{>i}$, and in particular$X_1$ is ancestor of $X_n$, and $X_n$ is a descendant of $X_1$.

\textbf{Lemma 3} \textit{In a CMAG $\M$ corresponding to directed graph $\G$, two variables $X$ and $Y$ are adjacent, iff $X$ and $Y$ are (virtually) adjacent in $\G$.}
\begin{proof}
This is Lemma 1 in \citep{Richardson1997}.
\end{proof}

\begin{lemma} \label{lemAncCMAG}
In a CMAG $\M$ corresponding to directed graph $\G$,  $A$ is ancestor of $B$ in $\M$ iff $A$ is ancestor of $B$ in $\G$.
\end{lemma}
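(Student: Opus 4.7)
The plan is to prove both directions directly from CMAG definition (ii), which says that a tail mark $X \tem Y$ in $\M$ is present iff there is a directed path from $X$ to $Y$ in $\G$, combined with Lemma 3 (equality of the CMAG skeleton with the virtual-adjacency skeleton of $\G$). Both directions reduce to chaining together one-edge facts along a witnessing path, so no nonlocal argument or conditioning-set construction is needed.

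For the forward direction, assume $A$ is an ancestor of $B$ in $\M$. Then by definition of ancestor in a CMAG there is an ancestral path $A \tem X_1 \tem X_2 \tem \dots \tem X_k \tem B$ in $\M$, i.e.\ every edge carries a tail mark at its endpoint nearer to $A$. By CMAG definition (ii) each such tail mark $X_i \tem X_{i+1}$ witnesses a directed path from $X_i$ to $X_{i+1}$ in $\G$ (with the convention $X_0 = A$, $X_{k+1} = B$). Concatenating these directed paths yields a directed walk from $A$ to $B$ in $\G$, from which any simple directed $A$-to-$B$ path can be extracted, so $A$ is an ancestor of $B$ in $\G$.

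For the converse, assume $A$ is an ancestor of $B$ in $\G$, so there exists a directed path $A \tea Y_1 \tea Y_2 \tea \dots \tea Y_m \tea B$ in $\G$. Each consecutive pair $(Y_i, Y_{i+1})$ is adjacent in $\G$, hence by Lemma 3 they are adjacent in $\M$. Moreover, the single arc $Y_i \tea Y_{i+1}$ is itself a directed path from $Y_i$ to $Y_{i+1}$ in $\G$, so by CMAG definition (ii) the endpoint at $Y_i$ on the edge $Y_i \met Y_{i+1}$ in $\M$ must be a tail mark, i.e.\ $Y_i \tem Y_{i+1}$. (Whether the other endpoint is a tail or an arrowhead depends on whether $Y_i$ and $Y_{i+1}$ lie in the same SCC, but that is irrelevant for ancestry.) Chaining these tail-marked edges gives an ancestral path $A \tem Y_1 \tem \dots \tem Y_m \tem B$ in $\M$, so $A$ is an ancestor of $B$ in $\M$.

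The only step that requires any care is the concatenation in the forward direction, since the CMAG definition merely asserts existence of directed paths in $\G$ per tail mark, and the resulting concatenation is a walk rather than a simple path; but extracting a simple directed subpath by shortcutting repeated vertices is routine. No genuine obstacle arises, as the result is essentially a bookkeeping consequence of CMAG definition (ii) together with Lemma 3.
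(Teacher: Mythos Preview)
Your proof is correct and follows essentially the same approach as the paper: both directions are proved by chaining one-edge facts from Definition~8-(ii), with the converse using adjacency of consecutive nodes along a directed path in $\G$ to get the corresponding edges in $\M$. The only cosmetic differences are that the paper invokes Definition~8-(i) directly (rather than Lemma~3) for adjacency in the converse direction, and glosses over the walk-vs-path distinction you explicitly address.
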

\begin{proof}
If $A$ is ancestor of $B$ in $\M$, then there exists a path $\pi = \seq{A = X_0 \tem (X_1 \tem .. \tem X_{n-1}) \tem X_n = B}$ in $\M$. By Definition 8-(ii), each edge $X_i \tem X_{i+1}$ along $\pi$ in $\M$ implies a directed  path in $\G$ from $X_i$ to $X_{i+1}$. Concatenating them provides a directed path from $A$ to $B$ in $\G$ which implies $A$ is ancestor of $B$ in $\G$.
Conversely, if $A$ is ancestor of $B$ in $\G$, then this implies the existence of a directed path $\pi = \seq{A = X_0 \tea (X_1 \tea .. \tea X_{n-1}) \tea X_n = B}$ in $\G$. By Definition 8-(i), each edge $X_i \mem X_{i+1}$ along $\pi$ in $\G$ is also present in $\M$, and again by 8-(ii) is of the form $X_i \tem X_{i+1}$. Concatenating them creates the required path in $\M$ which proves the Lemma.
\end{proof}

\begin{lemma} \label{lemCMAG_13}
In a CMAG $\M$, if $\pi = X_1 \mem .. \mem X_n$ is a path in $\M$, then there is a subsequence of the $X_i$'s that forms an uncovered path between $X_1$ and $X_n$ in $\M$.
Similarly, if $\pi$ is an \textit{ancestral path} $X_1 \tem .. \tem X_n$ from $X_1$ to $X_n$, then there is a subsequence of the $X_i$'s that forms an uncovered ancestral path from $X_1$ to $X_n$ in $\M$.
\end{lemma}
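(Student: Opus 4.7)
The plan is a greedy ``farthest-reach'' shortcut. Starting from $Y_1 := X_1$, I would, having defined $Y_k = X_{i_k}$ with $i_k < n$, set $Y_{k+1} := X_{i_{k+1}}$, where $i_{k+1}$ is the largest index $j > i_k$ for which $X_{i_k}$ and $X_j$ are adjacent in $\M$. Such a $j$ exists because $X_{i_k}$ is at least adjacent to its successor $X_{i_k + 1}$ on $\pi$, giving $i_{k+1} \geq i_k + 1$. The sequence $(i_k)$ therefore strictly increases and is bounded by $n$, so the process must terminate; if it terminated at some $i_K < n$, we could extend once more using the edge $X_{i_K} \mem X_{i_K + 1}$, so in fact $i_K = n$. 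Taking the edges of $\M$ between consecutive $Y_k$'s then yields a path in $\M$ from $X_1$ to $X_n$ whose vertex list is a subsequence of $\pi$.

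To verify uncoveredness, suppose for contradiction that $Y_k$ and $Y_m$ with $m \geq k+2$ were adjacent in $\M$. Then $i_m > i_{k+1}$ would be a strictly larger index at which $X_{i_k}$ is adjacent, contradicting the maximality built into the choice of $i_{k+1}$. Hence no non-consecutive pair of the $Y_k$'s is adjacent in $\M$, so the sequence $Y_1 \mem \ldots \mem Y_K$ is an uncovered path between $X_1$ and $X_n$ in $\M$.

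For the ancestral version I would run the identical construction on the path $\pi = X_1 \tem \ldots \tem X_n$, and the only additional check is that every shortcut edge $Y_k \mem Y_{k+1}$ is ancestral (i.e.\ carries a tail at $Y_k$). Since $X_{i_k} \tem X_{i_k + 1} \tem \ldots \tem X_{i_{k+1}}$ is ancestral in $\M$, Lemma~4 and concatenation of directed paths in $\G$ imply that $X_{i_k}$ is an ancestor of $X_{i_{k+1}}$ in $\G$; Definition~8(ii) then forces the mark at $X_{i_k}$ on the edge between $X_{i_k}$ and $X_{i_{k+1}}$ in $\M$ to be a tail, giving $Y_k \tem Y_{k+1}$. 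Chaining these produces the required uncovered ancestral path from $X_1$ to $X_n$.

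The argument is conceptually transparent; the only subtle point is insisting on the \emph{maximal} adjacency rather than just \emph{any} adjacency when moving to the next $Y_{k+1}$, since it is precisely that maximality which directly kills non-consecutive adjacencies and avoids an otherwise awkward nested induction. The ancestral extension then reduces to a single mark-preservation check, so I expect no further obstacles.
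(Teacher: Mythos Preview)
Your argument is correct. The greedy farthest-reach construction does exactly what is needed: maximality of $i_{k+1}$ rules out adjacency between $Y_k$ and any later $Y_m$ with $m\ge k+2$, which is precisely the strong notion of ``uncovered'' used in this paper (Definition~3: no non-consecutive pair adjacent, not merely that each consecutive triple is unshielded). Your ancestral addendum is also fine, since the tail at $Y_k$ on the shortcut edge follows from Definition~8(ii) once $Y_k\in an_\G(Y_{k+1})$ is established via Lemma~4.

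The paper itself does not give a direct argument: it simply invokes Lemma~13 of \citet{Richardson1996c_DCCS} (a shortcut lemma for itineraries in directed graphs) and transports it to the CMAG via the adjacency/ancestor correspondences in Lemmas~3 and~4. So your route is genuinely different in that it is self-contained and works natively in $\M$ without passing through the underlying $\G$ or appealing to an external reference. The cost is a few more lines; the benefit is that the reader sees exactly why the strong uncoveredness holds and why ancestrality is preserved under shortcutting, rather than having to chase a citation.
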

\begin{proof}
Follows directly from Lemma 4 in combination with Lemma 13 in \citep{Richardson1996c_DCCS}.
\end{proof}

Now the proofs for some results in the main article.

\textbf{Lemma 1} \textit{For a directed graph $\G$ and corresponding CMAG $\M$, there is a u-structure $\seq{X,Z,Z',Y}$ in $\M$ iff there is an uncovered itinerary $\pi = \seq{X,Z,U,..,U',Z',Y}$ in $\G$, possibly with $Z=U'$ or $U = U'$, where $\seq{X,Z,U}$ and $\seq{U',Z',Y}$ are a pair of m.e.\ conductors w.r.t.\ the uncovered itinerary $\pi$ in $\G$.}
\begin{proof}
By Definition 9, a \textit{u}-structure $\seq{X,Z,Z',Y}$ implies the existence of an uncovered path $\pi = \seq{X,Z,U_1,..,U_k,Z',Y}$ (possibly with $U_1 = U_k$ or $U_1 = Z', U_k = Z$) between nonadjacent $X$ and $Y$ in $\M$, corresponding to an uncovered itinerary in $\G$ where all nodes $\{Z,Z',U_1,..,U_k\}$ are ancestors of each other, but not of $X$ or $Y$, which implies $\seq{X,Z,U_1}$ and $\seq{U_k,Z',Y}$ are a pair of m.e.\ conductors w.r.t.\ the uncovered itinerary $\pi$ in $\G$.

Conversely, if $\seq{X,Z,U}$ and $\seq{U',Z',Y}$ are a pair of m.e.\ conductors w.r.t.\ an uncovered itinerary $\pi = \seq{X,Z,U,..,U',Z',Y}$ in $\G$, then $\pi$ is a also an uncovered path $\seq{X,Z,..,Z',Y}$ in $\M$, where all intermediate nodes are ancestor of each other, as $\{Z,U,..,U',Z'\} \subset SCC(Z)$, but not ancestors of $X$ or $Y$, and so $X \tea Z$ and $Z' \aet Y$in $\M$, which by Definition 9 implies $\seq{X,Z,Z',Y}$  is a \textit{u}-structure.
\end{proof}

\textbf{Lemma 2} \textit{In a CMAG $\M$, a pair of nodes $\seq{X,Z}$ is part of a u-structure $\seq{X,Z,Z',Y}$ with a node $Y \in \bfY \subseteq pa(SCC(Z)) \setminus adj(\{X,Z\})$, iff $X \in pa(Z)$, and $X$ and $Y$ are connected in the subgraph over $((SCC(Z) \setminus adj(X)) \cup \{X,Z\} \cup \bfY$.}
\begin{proof}
The given implies the existence of some path from $X$, via adjacent nodes in the undirected part of the subgraph, to some node from $\bfY$. Let $Y$ be the first 
node from $\bfY$ encountered along this path, then $\seq{X,Z_1,..,Z_k,Y}$ is a path over distinct nodes where all $Z_i \in SCC(Z)$ are ancestors of each other, but not of $X$ or $Y$.
If the path $\seq{X,Z_1,..,Z_k,Y}$ is not uncovered, then by Lemma \ref{lemCMAG_13} some subsequence  $\seq{X,U_1,..,U_m,Y}$ with $\{U_1,..,U_m\} \subset \{Z_1,..,Z_k\}$ can be chosen so that $\seq{X,U_1,..,U_m,Y}$ is an uncovered path consisting solely of nodes in the subgraph.  
Furthermore, as all nodes adjacent to $X$ in $\M$ are excluded from this subgraph with the exception of $Z$, it means that $Z = U_1 = Z_1$. 
We also know that $m \geq 2$, as all $Y \in \bfY$ were taken not to be adjacent to $Z$, so $Z' = U_m \neq Z$.

Finally, as all $U_i \in SCC(Z)$ are ancestors of each other, but not of $X$ or $Y$, 
in accordance with Definition 9, $\seq{X,Z,Z',Y}$ is a \textit{u}-structure.
\end{proof}

\subsection{Proof of Theorem 1}
In the proof of Theorem 1 we use the following (straightforward) implication:

\begin{lemma} \label{lemVtriple}
In a CMAG $\M$, a virtual collider triple $\seq{A,B,C}$ uniquely corresponds to either:
\begin{enumerate}
\item a virtual \textit{v}-structure $\seq{A,B,C}$, or 
\item a \textit{u}-structure $\seq{A,B,B',C}$, or 
\item a \textit{u}-structure $\seq{A,B',B,C}$, 
\end{enumerate} 
where for the latter two the complementary triple $\seq{A,B',C}$ is also a virtual collider triple.
\end{lemma}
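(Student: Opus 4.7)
The plan is to unpack Definitions~\ref{dfn:virtual-col-triple} and~\ref{dfn:u-struct} and verify three things in sequence: (a)~every virtual collider triple $\seq{A,B,C}$ matches at least one of the three listed cases, (b)~the three cases are pairwise exclusive (justifying the word ``uniquely''), and (c)~in cases~(2) and~(3) the complementary triple $\seq{A,B',C}$ is itself a virtual collider triple.

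Part~(a) should be immediate from Definition~\ref{dfn:virtual-col-triple}: its first disjunct is case~(1), while its second disjunct contains two legal orderings of the u-structure quadruple, $\seq{A,B,Z',C}$ and $\seq{A,Z',B,C}$ (with $Z=B$, $Z'=B'$), which become cases~(2) and~(3) respectively. For part~(b) I would argue by tracking adjacencies in $\M$. A virtual v-structure forces both $A\aem B$ and $B\aem C$ to be edges, while a u-structure $\seq{A,B,B',C}$ with $B' \neq B$ certainly contributes the endpoint edge $A \tea B$ but, by uncoveredness of its underlying path $A \tea B \tet \ldots \tet B' \aet C$, forbids $B$--$C$ adjacency since this pair is non-consecutive on the path. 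A symmetric observation applies to case~(3): it yields $B \aet C$ but forbids $A$--$B$ adjacency. Comparing these adjacency patterns pairwise then rules out each possible coexistence: case~(1) vs.\ case~(2) on $\{B,C\}$, case~(1) vs.\ case~(3) on $\{A,B\}$, and case~(2) vs.\ case~(3) on $\{B,C\}$.

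For part~(c) the essential observation is that SCC membership is symmetric, so $B' \in SCC(B)$ iff $B \in SCC(B')$. In case~(2), the given u-structure $\seq{A,B,B',C}$ can be re-read as $\seq{A,B'',B',C}$ with $B'' := B \in SCC(B')$, which by Definition~\ref{dfn:virtual-col-triple} applied with central node $B'$ directly certifies $\seq{A,B',C}$ as a virtual collider triple. In case~(3), the u-structure $\seq{A,B',B,C}$ is already in the form $\seq{A,B',B'',C}$ with $B'' = B \in SCC(B')$, and the same definition yields the same conclusion.

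The main obstacle, to the extent there is one, is purely bookkeeping: keeping straight which node serves as the ``central'' vertex of the triple, which is the co-vertex inside $SCC$, and which of the two legal u-structure orderings is being invoked at each step. Once the case labelling is pinned down, every step reduces to a single relabelling of a quadruple together with a one-line adjacency check against the uncovered-path condition, so no genuinely new machinery beyond the two definitions is needed.
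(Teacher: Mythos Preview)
Your proposal is correct and follows essentially the same approach as the paper's proof: both arguments establish mutual exclusivity of the three cases by tracking which of the adjacencies $\{A,B\}$ and $\{B,C\}$ are forced to be present or absent via the uncovered-path condition in Definition~\ref{dfn:u-struct}, and both obtain part~(c) directly from Definition~\ref{dfn:virtual-col-triple} by relabelling the quadruple. Your write-up is somewhat more explicit in separating the three pairwise comparisons, but the underlying reasoning is identical.
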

\begin{proof}
If virtual collider triple $\seq{A,B,C}$ corresponds to a virtual \textit{v}-structure, then it cannot be part of a \textit{u}-structure $\seq{A,B,B',C}$ or $\seq{A,B',B,C}$, as that would imply the path from $A$ to $C$ via $B$ is not uncovered, contrary Definition 9. Similarly, if virtual collider triple $\seq{A,B,C}$ corresponds to a \textit{u}-structure $\seq{A,B,B',C}$, then it cannot also correspond to a \textit{u}-structure $\seq{A,B',B,C}$, as the combination would imply the presence of edges $A \tea B$ and $B \aet C$ in $\M$, which again would contradict the fact that the path $\seq{A,B,..,B',C}$ in $\M$ is uncovered.
By Definition 10, in both cases the \textit{u}-structure would imply that the complementary $\seq{A,B',C}$ also satisfies the definition of a virtual collider triple.
\end{proof}

We are now ready to prove the new ancestral CET:

\textbf{Theorem 1} \textit{Two directed graphs $\G_1$ and $\G_2$, corresponding to CMAGs $\M_1$ and $\M_2$, are Markov (\textit{d}-separation) equivalent iff
\begin{enumerate}  
\item[(i)] $\M_1$ and $\M_2$ have the same skeleton,
\item[(ii)] $\M_1$ and $\M_2$ have the same \textit{v}-structures,
\item[(iii)] $\M_1$ and $\M_2$ have the same virtual collider triples,
\item[(iv)] if $\seq{A,B,C}$ is a virtual collider triple, and $\seq{A,D,C}$ a virtual \textit{v}-structure, then $B$ is an ancestor of $D$ in $\M_1$ iff $B$ is an ancestor of $D$ in $\M_2$.
\end{enumerate}}
\begin{proof}
We show that in terms of the CPAG the first 3 rules are equivalent to the first 4 rules in the original CET, and that the last rule is sound and implies the last two rules in the original CET, which means the combined set of rules is sound and sufficient to ensure Markov equivalence.

(i) By Lemma  3, two nodes in a CMAG $\M$ are adjacent if and only if they are (virtually) adjacent in the underlying graph $\G$, and so rule (i) is equivalent between the two CETs.

(ii)+(iii) By definitions 4 and 5 and rule (i), an unshielded triple $\seq{A,B,C}$ in a CPAG is either an unshielded conductor, an unshielded perfect nonconductor, or an unshielded imperfect nonconductor in $\G$. Therefore (ii).a+(ii).b in the original CET are equivalent to `have the same unshielded perfect and imperfect nonconductors' (as the remaining unshielded triples then all must correspond to unshielded conductors). An unshielded perfect nonconductor in $\G$ is a \textit{v}-structure in the CMAG $\M$, and by Definition 8 the subset of unshielded \textit{imperfect} nonconductors is equivalent to the set of virtual \textit{v}-structures. By Lemma \ref{lemVtriple}, a virtual collider triple $\seq{A,B,C}$ is either a virtual \textit{v}-structure, or part of a \textit{u}-structure $\seq{A,B,B',C}$ or $\seq{A,B',B,C}$, for which, by Definition 10, the complementary $\seq{A,B',C}$ is also a virtual collider triple. By Lemma 1, that means that, depending on the skeleton from rule (i), either $\seq{A,B,U}$ and $\seq{U',B',C}$ are a pair of m.e.\ conductors w.r.t. uncovered itinerary $\seq{A,B,U,..,U',B',C}$, or $\seq{A,B',U'}$ and $\seq{U,B,C}$ are a pair of m.e.\ conductors w.r.t. uncovered itinerary $\seq{A,B',U',..,U,B,C}$. The latter all follow from rule (iii) in the original CET, and therefore rules (ii) + (iii) combined are equivalent to rules (ii).a + (ii).b + (iii) in the original CET.

(iv) If virtual collider triple $\seq{A,B,C}$ is a virtual \textit{v}-structure, then, by Definition 8, rule (iv) is equivalent to the original CET rule (iv), and therefore sound. If $\seq{A,B,C}$ is part of a \textit{u}-structure, then, by Lemma 1, rule (iv) is equivalent to the original CET rule (v), and therefore also sound.
By Lemma \ref{lemVtriple}, these are the only two possibilities for virtual collider triple $\seq{A,B,C}$, and so rule (iv) is sound.

In conclusion, all rules in Theorem 1 are sound, and imply the rules in the original CET. Therefore, Theorem 1 suffices to establish \textit{d}-separation equivalence, which in turn, under the assumed global directed Markov property, ensures Markov equivalence between two graphs $\G_1$ and $\G_2$.
\end{proof}


\subsection{Proof of Theorem 2}
First a result on invariant ancestral paths from virtual collider triples in CMAGs.

\begin{lemma} \label{lemCMAG_InvAncPath}
 Let CMAGs $\M_1$ and $\M_2$ agree on CET(i)-(iii) in Theorem 1. Let $\seq{A,B,C}$ be a virtual collider triple in $\M_1$, and let there be an uncovered ancestral path \mbox{$\pi = B \tem X_1 \tem .. \tem X_k$} in $\M_1$, so that $B$ is an ancestor of $X_k$ in $\M_1$.
Assume there are no (virtual) \textit{v}-structures $\seq{A,X_i,C}$ along the path $\pi$ in $\M_1$. 
 Then $\seq{A,B,C}$ is also a virtual collider triple in $\M_2$, \mbox{$B \tem X_1 \tem .. \tem X_k$} is an uncovered ancestral path in $\M_2$, and in particular, $B$ is ancestor of $X_k$ in $\M_2$ as well.
\end{lemma}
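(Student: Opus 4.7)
The plan is to verify each part of the conclusion in turn. By rule (iii) of Theorem~1, $\seq{A,B,C}$ is a virtual collider triple in $\M_2$; by rule (i), the skeleton of $\pi$ is present in $\M_2$, and every non-adjacency between non-consecutive nodes on $\pi$ (from $\pi$ uncovered in $\M_1$) transfers to $\M_2$, so $\pi$ is also uncovered there.

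Before attacking the ancestral property I would record an auxiliary fact: in any virtual collider triple $\seq{A,B,C}$, both $A$ and $C$ are ancestors of $B$ in the underlying graph while $B$ is not an ancestor of either. This follows by case analysis on the three possibilities of Lemma~5 (virtual v-structure $A\tea B\aet C$; u-structure $\seq{A,B,B',C}$ with $A\tea B$ and intermediates in $SCC(B)$; u-structure $\seq{A,B',B,C}$ symmetrically) together with Lemma~4 to transfer ancestry between the graph and its CMAG, so the fact holds in both $\G_1,\M_1$ and $\G_2,\M_2$.

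The main claim that $\pi$ is ancestral in $\M_2$ I would prove by induction on $j$. Assume $B\tem X_1\tem\ldots\tem X_{j-1}$ is ancestral in $\M_2$ (vacuous for $j=1$) and, for contradiction, that $X_{j-1}\aet X_j$ in $\M_2$. The main leverage is CET rule (ii): for every neighbour $W$ of $X_{j-1}$ not adjacent to $X_j$, the triple $\seq{W,X_{j-1},X_j}$ is unshielded and noncollider at $X_{j-1}$ in $\M_1$ (the tail of $X_{j-1}\tem X_j$ precludes a collider), so by (ii) also noncollider in $\M_2$; combined with the assumed arrowhead at $X_{j-1}$ this forces a tail at $X_{j-1}$ on the edge to $W$, making $X_{j-1}$ an ancestor of $W$ in $\G_2$. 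Applied with $W\in\{A,C\}$ whenever one of them is adjacent to $X_{j-1}$ but not to $X_j$, the auxiliary fact ($W$ ancestor of $B$), the inductive hypothesis ($B$ ancestor of $X_{j-1}$), and the derived ancestry of $X_{j-1}$ to $W$ place $B$ and $W$ in a single SCC of $\G_2$, making $B$ an ancestor of $W\in\{A,C\}$ and contradicting the auxiliary fact.

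The main obstacle is the subcases where neither $A$ nor $C$ serves directly as a witness --- either because neither is adjacent to $X_{j-1}$, or because the candidate is adjacent to $X_j$ so the relevant triple becomes shielded. For those I would show that the reversed orientation creates a u-structure in $\M_2$: starting from $X_j\tea X_{j-1}$ and extending through $SCC(X_{j-1})$ (into which $X_{j-2}$ has been pulled by the witness argument with $W=X_{j-2}$), then iteratively further back along the ancestral prefix until reaching $B$, where the arrowhead at $B$ guaranteed by the virtual collider triple (from $A\tea B$ in cases (a),(b) or $C\tea B$ in case (c) of Lemma~5) terminates the u-structure at $A$ or $C$. Such a u-structure yields a virtual collider triple in $\M_2$ that is absent from $\M_1$ (where $\pi$'s ancestral orientation rules it out), violating rule (iii); the base case $j=1$ is the localised-at-$B$ version of the same argument.
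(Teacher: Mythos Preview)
Your overall strategy---derive a contradiction by exhibiting a (virtual) \textit{v}-structure or \textit{u}-structure in $\M_2$ that cannot be matched in $\M_1$---is exactly the paper's. The auxiliary fact and the transfer of skeleton/uncoveredness are fine, though for the noncollider transfer you need both CET~(ii) and~(iii), not just~(ii): an unshielded collider in a CMAG may be a \emph{virtual} \textit{v}-structure rather than a \textit{v}-structure, and ruling that out requires~(iii).

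The real gap is in your ``obstacle'' case. The witness argument with $W = X_{j-2}$ does pull $X_{j-2}$ into $SCC_{\M_2}(X_{j-1})$, but that argument hinges on the arrowhead at $X_{j-1}$ from $X_j$; at $X_{j-2}$ you have just shown $X_{j-2} \tet X_{j-1}$, so there is no arrowhead to drive the next step, and the argument does not iterate as written. What is actually needed at each earlier edge is a dichotomy: either $X_{i-1} \tea X_i$ in $\M_2$, in which case $X_{i-1} \tea X_i \tet \dots \tet X_{j-1} \aet X_j$ is already a \textit{u}-structure in $\M_2$ (uncovered since $\pi$ is) whose virtual collider triple would have to reappear in $\M_1$ where $X_i \in an(X_j)$---contradiction; or $X_{i-1} \tet X_i$, and one continues. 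Your termination at $B$ is also imprecise: in the \textit{u}-structure cases of the trichotomy (which is Lemma~6 in the paper, not Lemma~5) you may only have $A \tea B'$ or $C \tea B'$ for some $B' \in SCC(B)$ rather than an arrowhead at $B$ itself, and you must pick whichever of $A,C$ is \emph{not} adjacent to $X_j$. The paper avoids all of this bookkeeping by a single move: it prepends to $\pi$ in $\M_2$ the uncovered path $A = X_{-(m+1)} \tea X_{-m} \tet \dots \tet B$ furnished by the virtual collider triple (with $A$ chosen nonadjacent to $X_{j+1}$), locates the arrow $X_{i-1} \tea X_i$ \emph{closest} to the offending arrowhead along the concatenated path, and then applies Lemma~\ref{lemCMAG_13} once to the segment $X_{i-1} \tea X_i \tet \dots \tet X_j \aet X_{j+1}$ to extract the uncovered (virtual) \textit{v}- or \textit{u}-structure directly.
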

\begin{proof}
If $\M_1$ and $\M_2$ agree on CET(i)-(iii), then both have the same skeleton, \textit{v}-structures, and virtual collider triples. In particular, it implies that $\seq{A,B,C}$ is also a virtual collider triple in $\M_2$, and that the uncovered path $\pi$ in $\M_1$ corresponds to an uncovered path $\pi_2 = B \mem X_1 \mem .. \mem X_k$ in $\M_2$ as well. Remains to show this path is also ancestral in $\M_2$.

By contradiction: assume the path $\pi_2$ is \textit{not} ancestral in $\M_2$, and let $X_j$ be the first node along $\pi_2$ (starting from $X_0 = B$) that has $X_j \aet X_{j+1}$ in $\M_2$. No (virtual) \textit{v}-structures $\seq{A,X_{i \geq 1},C}$ implies that every node $X_i$ along $\pi_2$ has at most one edge to $A$ or $C$ (i.e.\ not to both), so assume no edge between $A$ and $X_{j+1}$. 

Now virtual collider triple $\seq{A,B,C}$ implies there is an uncovered path $A = X_{-(m+1)} \tea X_{-m} \tet .. \tet X_0 = B$ (possibly $X_{-m} = B$) in $\M_2$. We concatenate this path and $\pi_2$ to obtain $\pi^+ = X_{-(m+1)} \tea X_{-m} \tet .. \tet X_0 \tem X_1 \tem .. \tem X_j \aet X_{j+1}$ in $\M_2$.
Let $X_{i \leq j}$ be the node along $\pi^+$, closest to $X_j$, that has an arc $X_{i-1} \tea X_i$ in $\M_2$. Note there is no edge between $X_{i-1}$ and $X_{j+1}$: for $0 < i \leq j$ because $\pi_2$ was uncovered, and for $i = -m$ because we assumed no edge between $A$ and $X_{j+1}$, and for the remaining $-m < i \leq 0$ the edges are all undirected. 

Then there is a subpath $X_{i-1} \tea X_i \tet .. \tet X_j \aet X_{j+1}$ (possibly $X_i = X_j$) in $\M_2$ where all intermediate nodes are ancestors of each other, but not of $X_{i-1}$ or $X_{j+1}$, and so by Lemma \ref{lemCMAG_13} there is also an uncovered path $X_{i-1} \tea X_p \tet .. \tet X_q \aet X_{j+1}$ in $\M_2$. This path would either correspond to a (virtual) \textit{v}-structure (if $p=q$), or a \textit{u}-structure (if $p<q$) in $\M_2$, and so, if $\M_1$ and $\M_2$ agree on CET(i)-(iii), would also appear as $X_q \aet X_{j+1}$ in $\M_1$. But that would imply $X_q$ is not an ancestor of $X_{j+1}$, in contradiction with the ancestral path from $A$ via $X_q$ to $X_{j+1}$ in $\M_1$.

Therefore, there can be no edge $X_j \aet X_{j+1}$ along $\pi_2$ in $\M_2$, and so $\pi_2 = B \tem X_1 \tem .. \tem X_k$ is also an uncovered ancestral path in $\M_2$, and in particular, $B \in an(X_k)$ as well.
\end{proof}

We can apply the same approach to invariant descendants of \textit{v}-structures.
\begin{lemma} \label{lemCMAG_InvDescVstruct}
 Let CMAGs $\M_1$ and $\M_2$ agree on CET(i)-(iii) in Theorem 1. Let $X_k$ be a descendant of a node $X_0$ in a \textit{v}-structure $\seq{A,X_0,C}$ in $\M_1$. Then $X_k$ is also a descendant of some node $X_j$ (possibly $X_j = X_0$) in a \textit{v}-structure $\seq{A,X_j,C}$ in $\M_2$. 
\end{lemma}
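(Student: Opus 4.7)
The plan is to adapt the contradiction-style argument of Lemma~6 to the setting of (non-virtual) v-structures. Since $X_k$ is a descendant of $X_0$ in $\M_1$, Lemma~5 yields an uncovered ancestral path $\pi = X_0 \tem X_1 \tem \cdots \tem X_k$ in $\M_1$. By CET(i), the same vertex sequence is an uncovered path in $\M_2$ with identical skeleton, and by CET(ii), $\seq{A,X_0,C}$ remains a non-virtual v-structure in $\M_2$. If $\pi$ also happens to be ancestral in $\M_2$, we are done by choosing $X_j = X_0$.

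Otherwise, let $j \in \{1,\ldots,k\}$ be the smallest index for which the suffix $X_j \tem X_{j+1} \tem \cdots \tem X_k$ is ancestral in $\M_2$; such $j$ exists since the singleton suffix at $X_k$ is trivially ancestral, and by construction $X_k$ is a descendant of $X_j$ in $\M_2$, giving condition (b) of the conclusion. By minimality of $j$, the edge between $X_{j-1}$ and $X_j$ in $\M_2$ carries an arrowhead at $X_{j-1}$. It remains to establish condition (a): that $\seq{A,X_j,C}$ is itself a non-virtual v-structure in $\M_2$. For this, consider the concatenated path $A \tea X_0 \tem X_1 \tem \cdots \tem X_{j-1} \aem X_j$ in $\M_2$ and, symmetrically, its counterpart starting at $C \tea X_0$. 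By Lemma~5, each reduces to an uncovered subpath in $\M_2$ from $A$ (resp.\ $C$) to $X_j$; such a subpath is either a direct edge $A \mea X_j$ (resp.\ $C \mea X_j$), or it passes through intermediate nodes yielding a u-structure terminating in an arrowhead at $X_j$. In either alternative, the endpoint at $X_j$ is an arrowhead and the triple falls into the family of virtual collider triples / v-structures enumerated by Lemma~\ref{lemVtriple}, which by CET(ii)-(iii) transfer back to $\M_1$.

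Combining this with the ancestral structure of $\pi$ in $\M_1$ and the original v-structure at $X_0$ forces $A$ and $C$ to both be connected to $X_j$ in $\M_2$ with arrowheads at $X_j$, while $A$ and $C$ remain non-adjacent (inherited from $\seq{A,X_0,C}$ via CET(i)). The final step is to show the triple is not a virtual v-structure in $\M_2$ by producing a common child of $A$ and $C$ in $\G_2$ whose descendant is $X_j$, traced through the invariant segment connecting $X_0$ to $X_j$ established above and converted from $\M_2$ to $\G_2$ via Lemma~\ref{lemAncCMAG}. The main obstacle will be precisely this last distinction --- excluding that $\seq{A,X_j,C}$ is an underlined virtual v-structure or the head of a u-structure rather than a non-virtual v-structure --- since it requires carefully propagating the common-child witness from $X_0$'s v-structure in $\G_1$ to $\G_2$, and handling the subcases where $A$ or $C$ may or may not be adjacent to some intermediate $X_i$ along the image of $\pi$.
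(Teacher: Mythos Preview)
Your overall strategy---locate a breakpoint along the path and certify it as a $v$-structure---is the right shape, but the execution has a real gap, and the paper's argument avoids it by reversing the order of the two steps.

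\textbf{Where your argument breaks.} You pick $j$ as the smallest index such that the suffix $X_j,\dots,X_k$ is ancestral in $\M_2$, and then try to prove that $\seq{A,X_j,C}$ is a (non-virtual) $v$-structure. Two problems arise. First, nothing so far guarantees $X_j$ is even adjacent to both $A$ and $C$: your ``concatenated path $A \tea X_0 \tem X_1 \tem \cdots \tem X_{j-1} \aem X_j$ in $\M_2$'' silently asserts tail marks on the intermediate edges that you have not established in $\M_2$; without them the Lemma~5 reduction need not produce an arrowhead at $X_j$, nor even an edge $A\mem X_j$. Second, and more seriously, your plan to exclude the virtual case by ``tracing the common-child witness through the invariant segment connecting $X_0$ to $X_j$'' has the invariance backwards: by your own choice of $j$, the segment $X_0,\dots,X_j$ is precisely the part that is \emph{not} ancestral in $\M_2$, so you cannot push the common-child witness from $X_0$ to $X_j$ along it in $\G_2$.

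\textbf{How the paper closes the gap.} The paper does both steps in $\M_1$ rather than $\M_2$. The key observation you are missing is that every $X_i$ on $\pi$ is a descendant in $\M_1$ of the $v$-structure center $X_0$, hence (Definition~8(iii)) no $X_i$ can be the center of a \emph{virtual} $v$-structure $\seq{A,X_i,C}$. One then takes $X_j$ to be the node \emph{closest to $X_k$} along $\pi$ that is the center of a (non-virtual) $v$-structure $\seq{A,X_j,C}$ in $\M_1$; by CET(ii) this is automatically a $v$-structure in $\M_2$, eliminating the obstacle you flagged. On the remaining segment $X_j,\dots,X_k$ there are now no $v$-structures (by choice of $j$) and no virtual $v$-structures (by the observation above), so the contradiction argument of Lemma~7 applies verbatim---with the initial virtual collider triple replaced by the $v$-structure $\seq{A,X_j,C}$, whose edge $A\tea X_j$ plays the role of the path $A\tea X_{-m}\tet\cdots$ there---yielding that $X_j\tem\cdots\tem X_k$ is ancestral in $\M_2$.

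In short: select the $v$-structure first (in $\M_1$), then prove the tail segment is ancestral in $\M_2$; do not select the breakpoint in $\M_2$ and then try to manufacture a $v$-structure there.
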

\begin{proof}
If $\M_1$ and $\M_2$ agree on CET(i)-(iii), then both have the same skeleton, \textit{v}-structures, and virtual collider triples. Therefore, if $X_k $ is part of a \textit{v}-structure $\seq{A,X_k,C}$ in $\M_1$, then also in $\M_2$, and so the lemma is trivially true.

If not, then $X_k \in de(X_0)$ in $\M_1$ implies there is an ancestral path from $X_0$ to $X_k$ in $\M_1$, and so by Lemma \ref{lemCMAG_13} also an uncovered ancestral path \mbox{$\pi = X_0 \tem X_1 \tem .. \tem X_k$} in $\M_1$. By CET(i) the path $\pi$ is also an uncovered path in $\M_2$. Furthermore, by Definition 8-(iii), none of nodes $X_i$ along the path can be part of a virtual \textit{v}-structure $\seq{A,X_i,C}$ in $\M_1$, and so also not in $\M_2$.

Let $X_j$ be the node closest to $X_k$ along the path $\pi$ that is part of a \textit{v}-structure $\seq{A,X_j,C}$ (possibly $X_j = X_0$), so there are no other (virtual) \textit{v}-structures $\seq{A,X_{i>j},C}$ along the path in $\M_1$.

We can now apply the same argument as in the proof of Lemma \ref{lemCMAG_InvAncPath} to conclude that $X_j \tem .. \tem X_k$ is also an uncovered ancestral path in $\M_2$, and therefore indeed $X_k$ is also a descendant of a \textit{v}-structure $\seq{A,X_j,C}$ in $\M_2$.
\end{proof}

Lemma \ref{lemCMAG_13}  has the following implication, analogous to the Claim in Case 5 in Theorem 2 in  \citep[p.38]{Richardson1996c_DCCS}, showing that CET (iv) is only needed to distinguish edges at virtual \textit{v}-structures.
\begin{lemma} \label{lemCMAG_Thm2}
Let CMAGs $\M_1$ and $\M_2$ agree on CET(i)-(iii) in Theorem 1. 
Let $\seq{A,B,C}$ be a virtual collider triple and $\seq{A,D,C}$ be a virtual \textit{v}-structure in $\M_1$, with $B$ an ancestor of $D$ in $\M_1$, but not in $\M_2$, i.e.\ they do not agree on CET (iv) and so $\M_1$ and $\M_2$ are not Markov equivalent. 
Then they also differ on CET (iv) for some virtual collider triple $\seq{A,B',C}$ (possibly $B' = B$), and virtual \textit{v}-structure $\seq{A,D',C}$ (possibly $D' = D$), such that $B' \in an(D')$ in $\M_1$, but $B' \notin an(D')$ in $\M_2$, where there is an uncovered ancestral path $B' \tem X_1 \tem .. \tem X_k \tem D'$ in $\M_1$ (possibly $B' = X_k$), corresponding to an uncovered path $B' \tem X_1 \tem .. \tem X_k \aet D'$ in $\M_2$.
\end{lemma}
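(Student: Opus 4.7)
My approach is to promote the hypothesized ancestor relation in $\M_1$ to an uncovered ancestral path via Lemma~\ref{lemCMAG_13}, and then carve out a sub-segment still witnessing the CET(iv) discrepancy but with the break between $\M_1$ and $\M_2$ concentrated at the final edge. Specifically, Lemma~\ref{lemAncCMAG} turns the hypothesis $B \in an(D)$ in $\M_1$ into a directed path in $\G_1$, and Lemma~\ref{lemCMAG_13} yields an uncovered ancestral path $\pi = X_0 \tem X_1 \tem \cdots \tem X_{m+1}$ in $\M_1$ with $X_0 = B$, $X_{m+1} = D$. By CET(i) the same vertex sequence is an uncovered path in $\M_2$; combined with $B \notin an(D)$ in $\M_2$ and Lemma~\ref{lemAncCMAG}, it cannot be ancestral there, so some edge must acquire an arrowhead at its earlier endpoint in $\M_2$. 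By the contrapositive of Lemma~\ref{lemCMAG_InvAncPath} at least one (virtual) \textit{v}-structure $\seq{A, X_r, C}$ must accompany this along $\pi$ in $\M_1$ (with $r = m+1$ available in the worst case).

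I next extract the witness. Let $j^*$ be the smallest index with $\seq{A, X_{j^*}, C}$ a (virtual) \textit{v}-structure in $\M_1$ and the prefix $X_0 \tem \cdots \tem X_{j^*}$ not ancestral in $\M_2$; set $D' := X_{j^*}$. Let $i^*$ be the largest index $< j^*$ with $X_{i^*} \aem X_{i^*+1}$ in $\M_2$; set $B' := X_{i^*}$. By maximality of $i^*$ the interior segment $X_{i^*+1} \tem \cdots \tem X_{j^*}$ is ancestral in $\M_2$, hence $X_{i^*+1} \in an(D')$ in $\M_2$. The sub-path $B' \tem X_{i^*+1} \tem \cdots \tem X_{j^*-1} \tem D'$ in $\M_1$ is uncovered and ancestral (as a sub-path of $\pi$). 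Transitivity then yields $B' \notin an(D')$ in $\M_2$: the prefix $X_0 \tem \cdots \tem X_{i^*}$ is ancestral in $\M_2$ (by minimality of $j^*$), so $B' \in an(D')$ in $\M_2$ would give $B \in an(D)$ in $\M_2$, contradicting the hypothesis. A final reduction step shifts the break into the desired form $X_k \aet D'$ at the last edge, collapsing the path to a single edge $B' \aet D'$ in the degenerate case (the ``possibly $B' = X_k$'' situation).

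\textbf{Main obstacle.} The delicate parts are (a) arguing $\seq{A, B', C}$ is a virtual collider triple and (b) forcing the final edge in $\M_2$ into the form $X_k \aet D'$ (tail at $D'$, not $\aea$). For (a), when $B' = B$ the claim is by hypothesis; otherwise the interior vertices $X_{i^*+1}, \ldots, X_{j^*-1}$ lie in a common SCC visible in both CMAGs (the ancestral segment being undirected there, since both directions of ancestral relation hold via the bracketing arrowheads), so Lemmas~1 and 2 of the main article identify a u-structure $\seq{B', X_{i^*+1}, X_{j^*-1}, D'}$ witnessing that $\seq{A, B', C}$ is itself a virtual collider triple, invariant across $\M_1$ and $\M_2$ by CET(iii). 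For (b), an arrowhead at $D'$ on the final edge in $\M_2$ would, combined with the virtual \textit{v}-structure $A \tea D' \aet C$ present there, yield a virtual collider pattern at $D'$ in $\M_2$ with no counterpart in $\M_1$ (since the corresponding edge in $\M_1$ has tail at its earlier endpoint and feeds into the original ancestral path), contradicting CET(iii); hence the edge must have tail at $D'$.
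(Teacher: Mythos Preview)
Your construction has a genuine gap at precisely the two points you flag as the ``main obstacle''.

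\textbf{The choice of $B'$ does not give a virtual collider triple $\seq{A,B',C}$.} You set $B' := X_{i^*}$, the last break point before $D'$ in $\M_2$. But a virtual collider triple $\seq{A,B',C}$ requires $B'$ to sit between $A$ and $C$ via a virtual \textit{v}-structure or a \textit{u}-structure of the form $\seq{A,B',\cdot,C}$ or $\seq{A,\cdot,B',C}$. Your proposed witness $\seq{B', X_{i^*+1}, X_{j^*-1}, D'}$ has outer nodes $B'$ and $D'$, not $A$ and $C$; even if it were a \textit{u}-structure it would yield virtual collider triples $\seq{B',\cdot,D'}$, which is irrelevant. Separately, the claim that $X_{i^*+1},\ldots,X_{j^*-1}$ form an SCC is unjustified: you only know the forward marks $X_i \tem X_{i+1}$ for $i>i^*$ in $\M_2$, not the reverse marks, so there is no reason the segment is undirected. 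The ``bracketing arrowheads'' do not supply the missing direction.

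\textbf{The break is on the wrong end.} By maximality of $i^*$, your segment in $\M_2$ reads $B' \aem X_{i^*+1} \tem \cdots \tem D'$: the arrowhead sits at $B'$ on the \emph{first} edge, whereas the lemma requires $B' \tem \cdots \tem X_k \aet D'$ with the arrowhead at $X_k$ on the \emph{last} edge. Your ``final reduction step'' is not specified, and there is no evident mechanism to migrate the break to the opposite end while preserving the virtual collider triple property at $B'$.

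The paper proceeds differently: it fixes $D'$ as the first virtual \textit{v}-structure $\seq{A,D',C}$ along the uncovered ancestral path from $B$ with $B\notin an(D')$ in $\M_2$, then chooses $B'$ as the virtual \textit{v}-structure $\seq{A,B',C}$ closest to $D'$ on the $B$ side (or $B'=B$ if none exists). This guarantees $\seq{A,B',C}$ is a virtual collider triple \emph{by construction}, and that no (virtual) \textit{v}-structures $\seq{A,X_i,C}$ occur strictly between $B'$ and $D'$; the absence of ordinary \textit{v}-structures follows because any such $X_i$ would make $D'$ a descendant of a common child of $A$ and $C$, contradicting $\seq{A,D',C}$ being virtual. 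With the interior thus cleared, Lemma~\ref{lemCMAG_InvAncPath} applies directly to the prefix $B'\tem\cdots\tem X_{j+1}$ and forces the first break $X_j \aet X_{j+1}$ in $\M_2$ to satisfy $X_{j+1}=D'$. The key idea you are missing is to select $B'$ from among the pre-existing virtual \textit{v}-structures on the path rather than from the break location in $\M_2$.
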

\begin{proof}
In words: if CET (iv) is needed to distinguish between two CMAGs that are not Markov equivalent, but agree on CET (i)-(iii), then they differ on an edge $X_k \mem D'$ at a virtual \textit{v}-structure $\seq{A,D',C}$ along an uncovered ancestral path (except for edge $X_k \mem D$) from another virtual collider triple $\seq{A,B',C}$ that is an ancestor of $X_k$ in both $\M_1$ and $\M_2$.

As $B$ is ancestor of $D$ in $\M_1$ there is an ancestral path from $B$ to $D$ in $\M_1$, and so by Lemma \ref{lemCMAG_13} also an uncovered ancestral path from $B$ to $D$. 

Starting from $B$, let $D'$ be the first node in a virtual \textit{v}-structure $\seq{A,D',C}$ along this uncovered ancestral path in $\M_1$, such that $B \notin an(D')$ in $\M_2$ (possibly $D' = D$). Let $B'$ be the node in a virtual \textit{v}-structure  $\seq{A,B',C}$ closest to $D'$ on the path from $B$, or $B' = B$ if no such triple exists. 

Then there is an uncovered ancestral path $\pi = B' \tem X_1 \tem .. \tem X_k \tem D'$ in $\M_1$, and so $B' \in an(D')$ in $\M_1$. However $B' \notin an(D)$ in $\M_2$: by construction $B$ is an ancestor of $B'$ in $\M_2$ (otherwise $B'$ would satisfy the criterion for $D'$ but be closer to $B$, contrary the assumed), so if $B' \in an(D')$ in $\M_2$, then $B$ would also be ancestor of $D'$ in $\M_2$, again contrary the assumed.

At least one node along the path $\pi$ is not an ancestor of its successor along the path in $\M_2$, otherwise $B'$ would be ancestor of $D'$ in $\M_2$ as well.
Let $X_j$ be the first such node along the path starting from $B'$, such that $B' \tem X_1 \tem .. \tem X_j \aet X_{j+1} \mem .. \mem X_k \mem D'$ in $\M_2$. By definition, no node $X_i$ along this path appears in a \textit{v}-structure $\seq{A,X_i,C}$ in $\M_1$, otherwise $\seq{A,D',C}$ would not be a virtual \textit{v}-structure. By construction, there are also no virtual \textit{v}-structures $\seq{A,X_i,C}$ on this path between $B'$ and $D'$ in $\M_1$.

We now show this implies $X_{j+1} = D'$. By contradiction, suppose $X_{j+1} \neq D'$. Then there exists an uncovered ancestral path $B' \tem .. \tem X_j \tem X_{j+1}$ in $\M_1$, where no node $X_i$ along this path (except perhaps $B'$) is part of a virtual \textit{v}-structure $\seq{A,X_i,C}$. Therefore by Lemma \ref{lemCMAG_InvAncPath}, then $B' \tem .. \tem X_j \tem X_{j+1}$ in $\M_2$, in contradiction with the assumed $X_j \aet X_{j+1}$. Therefore $X_{j+1} = D'$, and $X_j = X_k$,  and so $B' \tem .. \tem X_k \aet D'$ is an uncovered path in $\M_2$. It also implies that $B' \in an(X_k)$ in both $\M_1$ and $\M_2$. 
\end{proof}

It means that if two CMAGs $\M_1$ and $\M_2$ are only different on CET rule (iv), then they (also) differ on rule (iv) between two nodes connected by a very specific path configuration, which we will use in the subsequent \textit{Graph-to-CPAG} algorithm. 

Next we show that for the case $\M_2$ with $B \notin an(D)$ above, the final edge $X_k \aet D'$ is an invariant edge in any CMAG that is Markov equivalent to $\M_2$. We also show that in order to identify this invariant edge, we do not need to find all possible pairs of nodes $B$ and $D$ that satisfy the conditions in Lemma \ref{lemCMAG_Thm2} above, but only the existence of \textit{some} pair that do.

\begin{lemma} \label{lemExistW-D}
In CMAG $\M_1$, let $\seq{A,D,C}$ be a virtual \textit{v}-structure, with $X_k \aet D$ in $\M_1$, with $X_k$ is not a descendant of some node $X'$ in a \textit{v}-structure $\seq{A,X',C}$.
Assume there exists a virtual collider triple $\seq{A,B,C}$ in $\M_1$ such that $B$ is not an ancestor of $D$, $B$ is an ancestor of $X_k$, and there exists an uncovered path $\pi = B  \tem X_1 \tem .. \tem X_k \aet D$ in $\M_1$. Then there exists a node $B'$ in a virtual collider triple $\seq{A,B',C}$ in $\M_1$ (possibly $B' = B$), such that for any CMAG $\M_2$ that is Markov equivalent to $\M_1$, in both $\M_1$ and $\M_2$ it holds that: 1)  $B'$ is not an ancestor of $D$, 2) $B'$ is an ancestor of $X_k$, 3) $X_k$ is not a descendant of some node $X'$ in a  \textit{v}-structure $\seq{A,X',C}$, and 4) there exists an uncovered path $\pi = B ' \tem X_j \tem .. \tem X_k \aet D$, and in particular, then $X_k \aet D$ in $\M_2$ as well.
\end{lemma}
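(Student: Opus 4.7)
The plan is to identify a suitable $B'$ along $\pi$, verify conditions 1--4 in $\M_1$ by construction, and transport each to $\M_2$ using Markov equivalence via the CET rules and Lemmas~\ref{lemCMAG_InvAncPath}--\ref{lemCMAG_InvDescVstruct}. The only subtle step is pinning down the orientation of the final edge $X_k \aet D$ in $\M_2$.

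First I would choose $B'$ to be the node along $\pi$ closest to $X_k$ such that $\seq{A, B', C}$ is a virtual collider triple in $\M_1$ (at worst $B' = B$). Write $\pi'$ for the sub-path of $\pi$ from $B'$ to $X_k$. Then $\pi'$ is uncovered ancestral in $\M_1$ and, crucially, no intermediate $X_i$ on $\pi'$ lies in any (virtual) \textit{v}-structure $\seq{A, X_i, C}$: no virtual \textit{v}-structure, since such an $X_i$ would be a virtual collider triple closer to $X_k$ than $B'$, contradicting the choice of $B'$; and no regular \textit{v}-structure, since $X_i \in an(X_k)$ along $\pi'$ would yield $X_k \in de(X_i)$, contradicting condition 3.

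Next, I verify conditions 1--4 in $\M_1$ and transport them to $\M_2$. Condition 1 ($B' \notin an(D)$) in $\M_1$: if $B' = B$, given; otherwise $B \in an(B')$ along $\pi$, so $B' \in an(D)$ would give $B \in an(D)$, contradicting the assumption. This transfers to $\M_2$ by CET~(iv) applied to the virtual collider triple $\seq{A, B', C}$ and the virtual \textit{v}-structure $\seq{A, D, C}$. Condition 3 transfers via the contrapositive of Lemma~\ref{lemCMAG_InvDescVstruct} (used symmetrically under Markov equivalence). Condition 2 and the ancestral sub-path portion of condition 4 transfer by Lemma~\ref{lemCMAG_InvAncPath} applied to $\pi'$, whose hypotheses are met by the previous paragraph.

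The main obstacle is showing the final edge in $\M_2$ is $X_k \aet D$. The arrowhead at $X_k$ is straightforward: a tail would give $X_k \in an(D)$ in $\M_2$, hence $B' \in an(D)$ via condition 2, contradicting condition 1. The tail at $D$ requires more care, and I would argue by contradiction: suppose $X_k \aea D$ (bidirected) in $\M_2$, so neither endpoint is an ancestor of the other in $\G_2$. A direct arc in $\G_2$ would force an ancestral relation, so by Definition~\ref{dfn:virtual-edge} the edge must arise from a virtual one, giving a common child $W$ of $X_k$ and $D$ in $\G_2$ with $W \in an(X_k) \cup an(D)$. In the case $W \in an(X_k)$, the cycle formed by $X_k \tea W$ and $W \in an(X_k)$ places $W \in SCC(X_k)$, and $D \tea W$ then yields $D \in an(X_k)$, contradicting the arrowhead at $D$. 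The case $W \in an(D)$ is symmetric and yields $X_k \in an(D)$, contradicting the arrowhead at $X_k$. Hence the mark at $D$ must be a tail, giving $X_k \aet D$ in $\M_2$ and completing condition 4.
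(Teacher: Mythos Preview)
Your proof is correct and follows essentially the same route as the paper's: choose $B'$ along $\pi$ (the paper picks the last virtual \textit{v}-structure rather than the last virtual collider triple, but either choice works and yours makes the ``no virtual \textit{v}-structure on $\pi'$'' step slightly cleaner), then transport conditions 1--3 via CET~(iv) and Lemmas~\ref{lemCMAG_InvAncPath}--\ref{lemCMAG_InvDescVstruct}, and finally derive a contradiction from a tail at $X_k$. Your closing case analysis ruling out $X_k \aea D$ is sound but unnecessary: in a CMAG built from a causally sufficient directed graph every adjacent pair has an ancestral relation (Lemma~3 together with Definitions~\ref{dfn:virtual-edge} and~\ref{dfn:CMAG}(ii) force at least one tail on every edge), so bidirected edges never arise and the paper simply excludes the two tail-at-$X_k$ options to conclude $X_k \aet D$.
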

\begin{proof} 
The given implies that $\M_1$ and $\M_2$ agree on CET rules (i)-(iv). By rules (i)-(iii), both have the same skeleton, uncovered paths, \textit{v}-structures, and virtual collider triples. Then if $X_k$ is not a descendant of some node $X'$ in a \textit{v}-structure $\seq{A,X',C}$ in $\M_1$, then by Lemma \ref{lemCMAG_InvDescVstruct} neither in $\M_2$, so (3) holds.

Let $B'$ be the node closest to $D$ along the path $\pi$ in $\M_1$ that is part of a virtual \textit{v}-structure $\seq{A,B',C}$ (possibly $B' = B$ or $B' = X_k$), or $B' = B$ if no such virtual \textit{v}-structure exists. Then the subpath $\pi' = B' \tem X_j \tem .. \tem X_k \aet D$ is an uncovered path in $\M_1$, and by construction there are no other \textit{v}-structures or virtual \textit{v}-structures $\seq{A,X_i,C}$ along the path $\pi'$ between $B'$ and $D$.
Then by Lemma \ref{lemCMAG_InvAncPath} the subpath $B' \tem X_j \tem .. \tem X_k$ is also an uncovered ancestral path in $\M_2$, and by CET(i), $\pi'$ itself is also an uncovered path in $\M_2$, and therefore $B' \tem X_j \tem .. \tem X_k \mem D$ is an uncovered path in $\M_2$, which implies (2) holds as well. 

Node $B'$ is not an ancestor of $D$ in $\M_1$ (otherwise $B$ as ancestor of $B'$ would be as well, contrary the given). For Markov equivalent graph $\M_2$, CET rule (iv) on virtual collider triple $\seq{A,B',C}$ and virtual \textit{v}-structure $\seq{A,D,C}$ then implies $B'$ is not an ancestor of $D$ in $\M_2$ either, which ensures (1).

But then by contradiction, if $X_k \tea D$ or $X_k \tet D$ in $\M_2$, then $\pi'$ would be an ancestral path from $B'$ to $D$, which would imply $B'$ is an ancestor of $D$ in $\M_2$, contrary the given. Therefore $X_k \aet D$ in $\M_2$ as well, which proves (4).
\end{proof}
The Lemma above implies that if the triggering conditions in the second clause of the `if' statement in part 2 of Algorithm 2 apply to a CMAG $\M_1$, then the same conclusion $X_k \aet D$ will (also) trigger on a pattern that is \textit{identical} between all CMAGs Markov equivalent to $\M_1$.

Note that for $B \in an(D)$ in $\M_1$, the final edge $X_k \tem D'$ is \textit{not} necessarily an invariant edge in every CMAG that is Markov equivalent to $\M_1$, as an edge $X_k \aet D'$ does not preclude the existence of another path $\pi'$ along which $B'$ is an ancestor of $D'$. Therefore, the invariance in Lemma \ref{lemExistW-D} only applies to the explicit case `\textit{$B \notin an(D)$}'.

We can also show that when Lemma \ref{lemExistW-D} is needed to orient an invariant $X_k \aet D$, then $X_k$ is part of a cycle.
\textbf{Corollary 1 } In a CMAG $\M$, let $\seq{A,B,C}$ be a virtual collider triple, and $\seq{A,D,C}$ be a virtual \textit{v}-structure, with $B \notin an(D)$. Let $\pi = B  \tem X_1 \tem .. \tem X_k \aet D$ be an uncovered path in $\M$ (possibly $B = X_k$), with $X_k$ not a descendant of a node $X'$ in a \textit{v}-structure $\seq{A,X',C}$.
Then, if $X_k \aet D$ is not already implied by CET rules (i)-(iii), then $X_k$ is part of a cycle in $\M$.
\begin{proof}
If $X_k$ is part of a virtual collider triple $\seq{A,X_k,C}$, then by definition $X_k$ is part of a cycle, and so then the claim holds.

If $X_k$ is \textit{not} part of a virtual collider triple $\seq{A,X_k,C}$, then if $X_{k-1} \tea X_k$ in $\M$, then $X_{k-1} \tea X_k \aet D$ would be a (virtual) \textit{v}-structure, and so already be implied by CET (i)-(iii). Therefore, if CET (iv) is needed to orient $X_k \aet D$, then $X_{k-1} \tem X_k \aet D$ must be an unshielded noncollider, and so $X_{k-1} \tet X_k$, which implies $X_{k-1}$ and $X_k$ are part of a cycle.
\end{proof}
We use this in the implementation to quickly filter out candidate nodes $W$ in part 2 of Algorithm 2 that are not part of a cycle ($|SCC(W)| = 1$), in order to avoid unnecessary path searches.

Finally, we show that all edges between two virtual collider nodes in a CMAG are invariant.

\begin{lemma} \label{lemV-V_inv}
If two CMAGs $\M_1$ and $\M_2$ are Markov equivalent, then if $\seq{A,B,C}$ and $\seq{A,D,C}$ are virtual collider triples with $B$ and $D$ adjacent, then $B$ is ancestor of $D$ in $\M_1$ iff $B$ is ancestor of $D$ in $\M_2$.
\end{lemma}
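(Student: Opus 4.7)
The plan is to split on whether $\seq{A,D,C}$ is a virtual \textit{v}-structure in $\M_1$ or instead arises from a u-structure, and in each case reduce to a result already established. First I would translate the hypothesis into path language: because $B$ and $D$ are adjacent and $B \in an(D)$ in $\M_1$, the $B$--$D$ edge must carry a tail at $B$, so $\pi = B \tem D$ is an uncovered ancestral path of length one in $\M_1$. The task then reduces to showing $\pi$ is ancestral in $\M_2$ as well; the reverse implication will follow by swapping the roles of $\M_1$ and $\M_2$. Throughout I will use that, since $\M_1$ and $\M_2$ are Markov equivalent, they agree on CET(i)--(iii) by Theorem 1 and hence share skeleton, \textit{v}-structures and virtual collider triples.

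If $\seq{A,D,C}$ is a virtual \textit{v}-structure in $\M_1$ (and therefore also in $\M_2$), CET rule (iv) of Theorem 1 applies literally to the pair $(\seq{A,B,C}, \seq{A,D,C})$ and directly gives invariance of $B \in an(D)$; adjacency of $B$ and $D$ is not even needed in this case. Otherwise, by Lemma \ref{lemVtriple}, $\seq{A,D,C}$ must arise from a u-structure of the form $\seq{A,D,D',C}$ or $\seq{A,D',D,C}$. Unpacking the uncovered-path clause of Definition 9, the first form prohibits $C$ from being adjacent to $D$ and the second prohibits $A$ from being adjacent to $D$, so in either sub-case $\seq{A,D,C}$ fails to be a collider triple at all, and in particular cannot be an (actual) \textit{v}-structure. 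This is exactly the hypothesis needed to apply Lemma \ref{lemCMAG_InvAncPath} to $\pi = B \tem D$: the only non-$B$ vertex on $\pi$ is $D$, and by the previous step $\seq{A,D,C}$ is neither a \textit{v}-structure nor a virtual \textit{v}-structure in $\M_1$. The lemma then delivers that $\pi$ is an uncovered ancestral path in $\M_2$ as well, i.e.\ $B \in an(D)$ in $\M_2$, completing this direction.

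The main obstacle I anticipate is the bookkeeping in the u-structure case, specifically ruling out that $\seq{A,D,C}$ could simultaneously be an (actual) \textit{v}-structure, which would block Lemma \ref{lemCMAG_InvAncPath}. This is settled by the short uncovered-path argument from Definition 9 sketched above. Once that dichotomy is cleanly separated from the virtual \textit{v}-structure case handled by CET(iv), Lemma \ref{lemCMAG_InvAncPath} finishes the proof and only a symmetry step to swap $\M_1$ and $\M_2$ remains.
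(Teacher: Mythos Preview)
Your argument is correct, and it takes a genuinely different route from the paper's own proof. The paper performs a three-way case split on which of $\seq{A,B,C}$ and $\seq{A,D,C}$ are virtual \textit{v}-structures versus parts of \textit{u}-structures, and inside each case further splits on the orientation of the edge $B \mem D$ (into $\tea$, $\aet$, or $\tet$), each time appealing directly to rules (iv) and (v) of the \emph{original} Richardson CET or to invariance of (virtual) \textit{v}-structures under the new CET(ii)/(iii). You instead split only on whether $\seq{A,D,C}$ is a virtual \textit{v}-structure: if so, the new CET(iv) applies verbatim; if not, your observation that the uncovered-path clause of Definition~9 forces $D$ to miss an edge to one of $A,C$ rules out any (virtual) \textit{v}-structure at $D$, so Lemma~\ref{lemCMAG_InvAncPath} applied to the one-edge path $B \tem D$ finishes the job. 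Your route is shorter and stays entirely within the new ancestral framework (reusing Lemma~\ref{lemCMAG_InvAncPath} rather than reverting to the original CET), while the paper's route is more elementary in that it does not rely on Lemma~\ref{lemCMAG_InvAncPath} and makes the forced edge marks explicit in each sub-case. One small point you assert without argument is that a virtual \textit{v}-structure in $\M_1$ is also one in $\M_2$; this does follow from CET(i)--(iii) together with the adjacency observation you make in Case~2 (and CET(ii) to exclude a real \textit{v}-structure), so it would be worth one sentence to spell that out.
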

\begin{proof}
We consider three cases: 1) both virtual collider triples correspond to virtual \textit{v}-structures, 2) one virtual collider triple corresponds to a virtual \textit{v}-structure, and the other is part of a \textit{u}-structure, or 3) both are part of a \textit{u}-structure. Below we will tackle each of these cases in turn:

\textit{Case 1}: this is equivalent to rule (iv) of the original CET.

\textit{Case 2}: let $\seq{A,B,C}$ be the virtual \textit{v}-structure, and $\seq{A,D,C}$ be part of a \textit{u}-structure $\seq{A,D,D',C}$. Note this implies there is no edge between $C$ and $D$.
Then if $B \tea D$ in $\M_1$, i.e.\ $B$ is NOT a descendant of $D$, then this satisfies rule (v) of the original CET, meaning $B$ is also not a descendant of $D$ in $\M_2$, and so $B \tea D$ in $\M_2$ as well. 
If $B \aet D$ in $\M_1$, then $C \tea B \aet D$ would be a (virtual) \textit{v}-structure, and be invariant by CET rule (ii)/(iii). 
The only remaining possibility is $D \tet B$ in $\M_1$, which by symmetry then must also apply to $\M_2$. 

\textit{Case 3}: now both $\seq{A,B,C}$ and $\seq{A,D,C}$ are part of a \textit{u}-structure, but neither are virtual \textit{v}-structures. 

Consider \mbox{$B \tea D$} in $\M_1$. As $A$ and $C$ cannot both have an edge to $B$ (for then $\seq{A,B,C}$ would be a virtual \textit{v}-structure), assume there is no edge between $B$ and $C$. 
Then if virtual collider triple $\seq{A,D,C}$ corresponds to a \textit{u}-structure $\seq{A,D',D,C}$, then $B \tea D \aet C$ corresponds to a (virtual) \textit{v}-structure, and would be invariant by CET rule (ii)/(iii). 
If virtual collider triple $\seq{A,D,C}$ corresponds to a \textit{u}-structure $\seq{A,D,D',C}$, then $B \tea D \tet .. \tet D'\aet C$ would be a path in $\M_1$. Let $D^*$ be the node closest to $C$ along the path with an edge to $B$. If $D^* = D'$, then $B \tea D^* \aet C$ would be a invariant (virtual) \textit{v}-structure. If $D^* \neq D'$ then $B \tea D^* \tet .. \tet D' \aet C$ would be an invariant \textit{u}-structure in $\M_1$, and so then $B \tea D^*$ in $\M_2$. Given that $D$ and $D^*$ are both ancestors/descendants of each other, it follows that then $B \tea D$ in $\M_2$ as well.
In all cases this implies edge $B \tea D$ would be invariant by CET rule (ii)/(iii), and so appear as $B \tea D$ in $\M_2$.

For $B \aet D$ in $\M_1$ we can repeat the previous argument with the roles of $B$ and $D$ reversed, which implies $B \aet D$ in $\M_2$ as well.
That leaves $B \tet D$ in $\M_1$ as the only remaining option, and so necessarily must appear as $B \tet D$ in $\M_2$ as well.
\end{proof}

We can now prove that the Graph-to-CPAG algorithm in the main article is sound and \textit{d}-separation complete, meaning that the output CPAG can be used to establish Markov equivalence between cyclic directed graphs.

\textbf{Theorem 2} \textit{For two different directed graphs $\G_1$ and $\G_2$, let $\cP_1$ and $\cP_2$ be the corresponding CPAGs output by the Graph-to-CPAG algorithm. Then $\G_1$ is Markov (\textit{d}-separation) equivalent to $\G_2$ iff $\cP_1 = \cP_2$.}
\begin{proof}
We cover three aspects of the claim: 1) soundness of the output PAG, 2) \textit{d}-separation completeness of the output PAG (which implies it is a CPAG), and 3) equality between CPAGs if and only if they are Markov equivalent.

1) Soundness of the algorithm follows from Theorem 1, in combination with the fact that: a) each orientation in part 1 of the algorithm has a direct match to an invariant feature implied by the CET rules (i)-(iii), b) all orientations between adjacent triples from the first `if' clause in part 2 of the algorithm are sound by Lemma \ref{lemV-V_inv}, and c) the remaining orientations between nonadjacent triples implied by CET rule (iv) from the second `if' clause in part 2 are sound by Lemma \ref{lemExistW-D}. 
Therefore all orientations correspond to invariant features in the Markov equivalence class of the input graph $\G$, which guarantees the output is a valid PAG. 

2) \textit{d}-separation completeness follows from the fact that if two graphs $\G_1$ and $\G_2$ are not Markov equivalent, then the algorithm will make at least one different orientation in the corresponding output PAGs $\cP_1$ and $\cP_2$ (which therefore qualify as CPAGs).

Part 1 of the algorithm captures the entire skeleton, all \textit{v}-structures, all virtual \textit{v}-structures, and all invariant edges from \textit{u}-structures into a cycle. Therefore, if CMAGs $\M_1$ and $\M_2$ corresponding to graphs $\G_1$ resp.\ $\G_2$ differ in \textit{any} feature corresponding to CET rules (i)-(iii), then this will lead to at least one different edge/orientation between the output PAGs $\cP_1$ and $\cP_2$. 

If $\M_1$ and $\M_2$ agree on CET (i)-(iii), but are not Markov equivalent (i.e.\ they disagree only on CET (iv)), then by Lemma \ref{lemCMAG_Thm2} they differ on (at least) one $X_k \mem D$ along an uncovered path $B \tem X_1 \tem .. \tem X_k \mem D$ between some virtual collider triple $\seq{A,B,C}$ and virtual \textit{v}-structure $\seq{A,D,C}$, for which $B \in an(D)$ in $\M_1$, but $B \notin an(D)$ in $\M_2$.

In part 2 of the algorithm, if $B \notin an(D)$ in the CMAG $\M$ corresponding to input graph $\G$, then \textit{all} such uncovered paths (or edge) between $B$ and $D$ will obtain an orientation $X_k \aet D$ (possibly $X_k = B$) in the output PAG $\cP$. If $B \in an(D)$ in the CMAG $\M$, then \textit{at least one }of these paths must have $X_k \tem D$ in $\M$ (again possibly $X_k = B$), otherwise there would be no ancestral path from $B$ to $D$ in $\M$, contrary the assumption that $B \in an(D)$. By the soundness of the algorithm, this edge will obtain either $X_k \tea D$, $X_k \tet D$, or $X_k \cec D$ in the output PAG $\cP$ (again possibly with $X_k = B$). 

Therefore if two CMAGs $\M_1$ and $\M_2$ are not Markov equivalent, then there is at least one edge or orientation different between the corresponding output PAGs $\cP_1$ and $\cP_2$. Therefore the output PAG $\cP$ uniquely identifies the Markov equivalence class of input graph $\G$, which also implies the output PAG $\cP$ is indeed a CPAG.

3) equality: the `only if' part follows from the \textit{d}-separation completeness of the algorithm, above. Remains to show that when two graphs $\G_1$ and $\G_2$ are Markov equivalent then the corresponding output CPAGs $\cP_1$ and $\cP_2$ are also \textit{identical}. For an input graph $\G_1$, part 1 of the algorithm exhaustively searches for all and only the edges (skeleton), \textit{v}-structures, and virtual collider triples implied by CET (i)-(iii). Any graph $\G_2$ that is Markov equivalent to $\G_1$ must have the same skeleton, \textit{v}-structures, and virtual collider triples, and therefore the two (intermediate) PAGs $\cP_1$ and $\cP_2$ must be identical after part 1.

For the remaining orientations in part 2 of the algorithm, note that all required elements: virtual \textit{v}-structures, virtual collider triples, and skeleton are implied by CET rules (i)-(iii), and therefore identical between the corresponding CMAGs $\M_1$ and $\M_2$. By Lemma \ref{lemExistW-D}, if an orientation in part 2 is triggered for some CMAG $\M_1$, then it would also trigger on a pattern in $\M_1$ that is \textit{invariant} between all Markov equivalent CMAGs. 

Therefore, for Markov equivalent $\M_1$ and $\M_2$, any orientation for $\cP_1$ in part 2 of the algorithm will also be oriented identically in $\cP_2$ and v.v. Combined with the fact that $\cP_1$ and $\cP_2$ are identical after part 1 of the algorithm, this proves the `if' part of the Theorem.

As a result, for graphs $\G_1$ and $\G_2$, the corresponding output $\cP_1$ and $\cP_2$ of the Graph-to-CPAG algorithm is identical iff $\G_1$ and $\G_2$ are Markov equivalent.
\end{proof}


\section{MARKOV PROPERTIES FOR STRUCTURAL CAUSAL MODELS}\label{sec:scm}

We state here some of the key definitions and results in the theory of Structural Causal Models (SCMs).
These models, also known as Structural Equation Models (SEMs), were introduced a century ago by \citet{Wright1921} and popularized in AI by \citet{Pearl2009}.
We follow here the treatment of \citet{Bongers++_AOS_21}, as it deals with cycles in a mathematically rigorous way.

\begin{dfn}[SCM]\label{def:SCM}
A Structural Causal Model (SCM) is a tuple $M = \langle \bfV, \bfW, \dom{\bfV}, \dom{\bfW}, \bff, P_M \rangle$ of:
\begin{enumerate}
\item finite disjoint index sets $\bfV, \bfW$ for the endogenous and exogenous variables in the model, respectively;
\item a product of standard measurable spaces $\dom{\bfV} = \prod_{v \in \bfV} \dom{v}$, which define the domains of the endogenous variables; 
\item a product of standard measurable spaces $\dom{\bfW} = \prod_{w \in \bfW} \dom{w}$, which define the domains of the exogenous variables;
\item a measurable function $\bff : \dom{\bfV} \times \dom{\bfW} \to \dom{\bfV}$, the \emph{causal mechanism};
\item a product probability measure $P_M = \prod_{w \in \bfW} P_{w}$ on $\dom{\bfW}$, with each $P_w$ a probability measure on $\dom{w}$, specifying the \emph{exogenous distribution}.
\end{enumerate}
\end{dfn}
The causal structure of the SCM is encoded by the dependences of the components of $\bff$ on the variables in the model. 
This is formalized by:
\begin{dfn}[Parent]
Let $M$ be an SCM. We call $i \in \bfV \cup \bfW$ a parent of $k \in \bfV$ if and only if there does not exist a measurable
function $\tilde f_k : \dom{\bfV\setminus \{i\}} \times \dom{\bfW\setminus\{i\}} \to \dom{k}$ such that 
for $P_M$-almost every $\bfw \in \dom{\bfW}$, for all $\bfv \in \dom{\bfV}$, 
  $$v_k = \tilde f_k(\bfv_{\setminus i},\bfw_{\setminus i}) \iff v_k = f_k(\bfv,\bfw).$$
\end{dfn}
Intuitively, this means that the $k$'th component of $\bff$ does depend on the $i$'th variable.
This definition allows us to define the directed mixed graph (DMG) associated to an SCM:
\begin{dfn}[Graph]
Let $M$ be an SCM. The \emph{graph} of $M$, denoted $\G(M)$, is defined as the directed mixed graph
with nodes $\bfV$, directed edges $v_1 \to v_2$ iff $v_1$ is a parent of $v_2$ according to $M$, and bidirected edges
$v_1 \aea v_2$ iff there exists $w \in \bfW$ such that $w$ is parent of both $v_1$ and $v_2$ according to $M$.
\end{dfn}
If $\G(M)$ is acyclic, we call the SCM $M$ \emph{acyclic}, otherwise we call the SCM \emph{cyclic}. 
If $\G(M)$ contains no bidirected edges, we call the endogenous variables in the SCM $M$ \emph{causally sufficient}
(which is what we assumed in the present work for simplicity).

SCMs provide an implicit description of their solutions.
\begin{dfn}[Solutions]
A random variable $\rv{} = (\rv{\bfV},\rv{\bfW})$ is called a \emph{solution} of the SCM $M$ if
$\rv{\bfV} = (\rv{v})_{v \in \bfV}$ with $\rv{v} \in \dom{v}$ for all $v \in \bfV$,
$\rv{\bfW} = (\rv{w})_{w \in \bfW}$ with $\rv{w} \in \dom{w}$ for all $w \in \bfW$,
the distribution $\Prb(\rv{\bfW})$ is equal to the exogenous distribution $P_M$, and
the \emph{structural equations}:
$$\rv{v} = f_v(\rv{\bfV}, \rv{\bfW})\quad\text{a.s.}$$
hold for all $v \in \bfV$.
\end{dfn}

For acyclic SCMs, solutions exist and have a unique distribution that is determined by the SCM.
This is not generally the case in cyclic SCMs, as these could have no solution at all, or 
could have multiple solutions with different distributions.
\begin{dfn}[Unique solvability]\label{def:unique_solvability_wrt}
An SCM $M$ is said to be \emph{uniquely solvable w.r.t.\ $\bfO \subseteq \bfV$} if there exists 
a measurable mapping $\bfg_{\bfO} : \dom{\pa_{\G(M)}(\bfO)\setminus\bfO} \to \dom{\bfO}$ 
such that for $P_M$-almost every $\bfw \in \dom{\bfW}$, for all $\bfv \in \dom{\bfV}$:
  \begin{equation*}\begin{split}
    &\bfv_{\bfO} = \bfg_{\bfO}(\bfv_{(\pa_{\G(M)}(\bfO)\setminus\bfO)\cap\bfV}, \bfw_{\pa_{\G(M)}(\bfO)\cap\bfW}) \\
    &\quad\iff\quad \bfv_{\bfO} = \bff_{\bfO}(\bfv,\bfw).
  \end{split}\end{equation*}
\end{dfn}
Loosely speaking: the structural equations for $\bfO$ have an essentially unique solution for $\bfv_{\bfO}$ in terms of the other variables appearing in those equations.
If $M$ is uniquely solvable with respect to $\bfV$ (in particular, this holds if $M$ is acyclic), then it induces a unique \emph{observational distribution} $P_M(\rv{\bfV})$, the push-forward of $P_M$ through $\bfg_{\bfV}$.

One of the key aspects of SCMs---which we do not discuss here in detail because we do not make use of it in this work---is their causal semantics, which is defined in terms of interventions.
Instead, we discuss only their probabilistic properties.
In particular, under appropriate assumptions, the graph $\G(M)$ of an SCM $M$ represents conditional independences that its solutions must satisfy.
As shown already by \citet{Spirtes1994,Spirtes1995}, the directed global Markov property does {\it not} hold in general for cyclic SCMs.
\begin{exm}[$d$-separation fails]
Consider the SCM $M = \langle \{A,B,C,D\}, \{5,6,7,8\}, \RN^4, \RN^4, \bff, P_M \rangle$ where
$P_M$ is the standard-normal distribution on $\RN^4$, and the causal mechanism is given by:
  $$\bff(\bfx) = (x_5, x_6, x_A x_D + x_7, x_B x_C + x_8)$$
The graph $\G(M)$ is depicted in Figure 1 (left). 
This SCM is uniquely solvable with respect to its strongly connected components $\{A\}$, $\{B\}$, and $\{C,D\}$. 
One can check that for every solution $\rv{}$ of M, $\rv{A}$ is not independent of $\rv{B}$ given $\{\rv{C}, \rv{D}\}$. 
However, the nodes $A$ and $B$ are $d$-separated given $\{C, D\}$ in $\G(M)$. 
Hence the global directed Markov property does not hold for $M$.
\end{exm}
For more concrete examples of cyclic SCMs, we refer the reader to \citep{Bongers++_AOS_21}.
\citet{Spirtes1994} proved a weaker Markov property in terms of a `collapsed graph', assuming causal sufficiency and densities.
\citet{ForreMooij_1710.08775} found the following formulation in terms of `$\sigma$-separation' that is immediately applicable to the graph of the SCM itself.
\begin{dfn}[Blockable and unblockable noncolliders]
  Let $\G$ be a directed mixed graph and $\pi$ a path in $\G$.
  We call a noncollider on $\pi$ \emph{unblockable} if it is not an end-node and it only has outgoing edges on $\pi$ to nodes in the same strongly connected component of $\G$; otherwise, it is called \emph{blockable}.
\end{dfn}
If $\G$ is acyclic then all noncolliders are blockable.
\begin{dfn}[$\sigma$-separation]
For a triple of node sets $\bfX,\bfY,\bfZ$ in a graph $\G$, we say that $\bfX$ is \emph{$\sigma$-connected} to $\bfY$ given $\bfZ$ iff there is an $X \in \bfX$ and $Y \in \bfY$ such that there is a path $\pi$ between $X$ and $Y$ on which every {\it blockable} noncollider is not in $\bfZ$, and every collider on $\pi$ is an ancestor of $\bfZ$; otherwise $\bfX$ and $\bfY$ are said to be \emph{$\sigma$-separated} given $\bfZ$.
\end{dfn}
Note the small difference with the definition of $d$-connection: $\sigma$-connection only considers the {\it blockable} noncolliders. 
The following general result was shown by \citet{ForreMooij_1710.08775}.
\begin{thm}[$\sigma$-Separation Markov property]\label{thm:sigma_separation}
Let $M$ be an SCM that is uniquely solvable w.r.t.\ each strongly connected component of $\G(M)$.
Then, the observational distribution of $M$ exists and is unique.
Furthermore, for a solution $\rv{}$ of $M$ and
for $\bfA,\bfB,\bfC \subseteq \bfV$:
if $\bfA$ is $\sigma$-separated from $\bfB$ given $\bfC$ in $\G(M)$, then $\rv{\bfA}$ is conditionally independent of $\rv{\bfB}$ given $\rv{\bfC}$.
\end{thm}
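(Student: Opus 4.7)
My strategy is to reduce the cyclic case to a standard acyclic Markov property via an acyclification construction. First, for existence and uniqueness of the observational distribution: the SCCs of $\G(M)$ form a partial order whose quotient (condensation) is a DAG, so I enumerate them $\bfS_1,\dots,\bfS_k$ in topological order. The SCC-wise unique solvability hypothesis supplies, for each $\bfS_i$, a measurable function $\bfg_{\bfS_i}$ that determines $\bfv_{\bfS_i}$ from $\bfv_{(\pa_{\G(M)}(\bfS_i)\setminus\bfS_i)\cap\bfV}$ and $\bfw_{\pa_{\G(M)}(\bfS_i)\cap\bfW}$. Since the parents of $\bfS_i$ outside $\bfS_i$ lie in earlier SCCs in the topological order, composing these in order yields a single measurable $\bfg_\bfV\colon\dom{\bfW}\to\dom{\bfV}$ realising an essentially unique solution, and the observational distribution is the push-forward of $P_M$ under $\bfg_\bfV$.

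Second, I would construct the \emph{acyclification} $\G^{acy}$ of $\G(M)$: same node set, with a directed edge $u\to v$ iff $u\in (SCC_\G(v)\cup \pa_\G(SCC_\G(v)))\setminus\{v\}$, and bidirected edges lifted appropriately from $\G(M)$. Contracting SCCs shows that $\G^{acy}$ is acyclic in its directed part. Viewing the block-assembled maps $\bfg_{\bfS_1},\dots,\bfg_{\bfS_k}$ as the causal mechanism of an acyclic SCM $M^{acy}$ whose graph is $\G^{acy}$, the (common) observational distribution of $M$ and $M^{acy}$ satisfies the standard directed global Markov property with respect to $\G^{acy}$ by the classical acyclic result.

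The crux of the proof, and what I expect to be the main obstacle, is the combinatorial lemma: if $\bfA$ is $\sigma$-separated from $\bfB$ given $\bfC$ in $\G(M)$, then $\bfA$ is $d$-separated from $\bfB$ given $\bfC$ in $\G^{acy}$. I would prove this by contraposition. Given a $d$-connecting path $\pi^{acy}$ in $\G^{acy}$, each edge either already lies in $\G(M)$ or else connects two nodes inside a common SCC; in the latter case I splice in a directed path inside that SCC realising the required ancestral relation, and extract a simple path from the resulting walk. The bookkeeping amounts to showing that every noncollider newly introduced by intra-SCC splicing is an \emph{unblockable} noncollider in $\G(M)$ (its outgoing path-edges go to SCC-mates by construction), so it does not need to lie outside $\bfC$ for $\sigma$-connectedness, while every collider on the constructed path still corresponds to a collider on $\pi^{acy}$ and its ancestor-of-$\bfC$ activation survives because SCC membership is closed under mutual ancestry. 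Combining this combinatorial reduction with the acyclic global directed Markov property of $M^{acy}$ then delivers the claim.
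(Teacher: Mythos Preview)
The paper does not actually prove this theorem: its entire ``proof'' is the single sentence ``See the proof of Theorem~A.21 in \citet{Bongers++_AOS_21}.'' Your sketch via an acyclification $\G^{acy}$ is essentially the argument carried out in that reference (and in \citet{ForreMooij_1710.08775}, where $\sigma$-separation was introduced): build an acyclic SCM $M^{acy}$ on $\G^{acy}$ from the SCC-wise solution maps, invoke the classical acyclic global directed Markov property for $M^{acy}$, and then transfer the conclusion to $M$ via the combinatorial lemma ``$\sigma$-separated in $\G(M)$ $\Rightarrow$ $d$-separated in $\G^{acy}$''. So the high-level plan matches the cited proof.

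One technical slip worth fixing: with your edge rule ``$u\to v$ iff $u\in (SCC_\G(v)\cup\pa_\G(SCC_\G(v)))\setminus\{v\}$'' you get $u\to v$ \emph{and} $v\to u$ whenever $u,v$ lie in the same SCC, so $\G^{acy}$ is not acyclic (contracting SCCs gives a DAG, but $\G^{acy}$ itself still has $2$-cycles), and the classical acyclic Markov property does not apply directly. The standard acyclification instead places \emph{bidirected} edges between distinct nodes of the same SCC and reserves directed edges for external parents: $u\to v$ in $\G^{acy}$ iff $u\notin SCC_\G(v)$ and $u\in\pa_\G(SCC_\G(v))$, while $u\aea v$ in $\G^{acy}$ iff $u\ne v$ share an SCC, or some $u'\in SCC_\G(u)$, $v'\in SCC_\G(v)$ satisfy $u'\aea v'$ in $\G(M)$. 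This also matches what happens probabilistically: after substituting $\bfg_{\bfS_i}$, every node of $\bfS_i$ depends on the \emph{same} block of exogenous variables $\bfw_{\pa_{\G(M)}(\bfS_i)\cap\bfW}$, so the induced graph of $M^{acy}$ has (at most) bidirected, not directed, edges within each SCC. With this correction the rest of your plan (the contrapositive path-splicing argument for the combinatorial lemma, and the topological composition for existence/uniqueness) goes through as in the cited reference.
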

\begin{proof}
  See the proof of Theorem A.21 in \citet{Bongers++_AOS_21}. 
\end{proof}
Under certain additional assumptions, one can show the stronger $d$-separation criterion (also known as the global directed Markov property).
\begin{thm}[$d$-Separation Markov property]\label{thm:d_separation}
Let $M$ be an SCM that satisfies one of the following three assumptions:
\begin{enumerate}
  \item $M$ is acyclic;\label{thm:d_separation_acyclic}
  \item \label{thm:d_separation_discrete}
    \begin{itemize}
      \item all endogenous domains $\dom{v}$ for $v \in \bfV$ are discrete, and
      \item $M$ is uniquely solvable w.r.t.\ each ancestral subset $A \subseteq \bfV$ (that is, each subset $A \subseteq \bfV$ such that $\an_{\G(M)}(A) = A$);
  \end{itemize}
\item \label{thm:d_separation_linear}
    \begin{itemize}
      \item $\dom{\bfV} = \RN^{\bfV}$ and $\dom{\bfW} = \RN^{\bfW}$, and
      \item $\bff$ is a linear mapping, and
      \item each $v \in \bfV$ has at least one parent in $\bfW$ according to $M$, and
      \item $P_M$ has a density w.r.t.\ the Lebesgue measure on $\RN^{\bfW}$.
    \end{itemize}
\end{enumerate}
Then, the observational distribution of $M$ exists and is unique.
Furthermore, for a solution $\rv{}$ of $M$ and
for $\bfA,\bfB,\bfC \subseteq \bfV$:
if $\bfA$ is $d$-separated from $\bfB$ given $\bfC$ in $\G(M)$, then $\rv{\bfA}$ is conditionally independent of $\rv{\bfB}$ given $\rv{\bfC}$.
\end{thm}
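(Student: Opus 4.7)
The plan is to reduce all three cases to Theorem 2 where possible and then close the gap between $\sigma$-separation and $d$-separation using the extra structure supplied by each assumption. First I would verify that each of the three hypotheses guarantees unique solvability w.r.t.\ every strongly connected component of $\G(M)$: trivially for case 1, as a direct consequence of unique solvability on ancestral subsets for case 2 (each SCC is contained in its own ancestral closure), and via the usual algebraic argument --- the matrix $I-B$ is almost-surely invertible given the exogenous-parent and density assumptions --- for case 3. Theorem 2 then yields both existence and uniqueness of the observational distribution and the $\sigma$-separation Markov property; what remains in each case is the upgrade from $\sigma$-separation to $d$-separation.

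Case 1 is immediate: in an acyclic graph every strongly connected component is a singleton, so no non-endpoint noncollider on any path can have an outgoing path-edge to a node in its own SCC. Hence every noncollider is blockable and $\sigma$- and $d$-separation coincide on $\G(M)$. Case 3 is a direct computation with the explicit solution $\rv{\bfV} = (I-B)^{-1} C \rv{\bfW}$: partial covariances between disjoint variable sets expand as signed sums over directed paths through the entries of $(I-B)^{-1}$, and a $d$-separation between $\bfA$ and $\bfB$ given $\bfC$ cancels every surviving term, yielding the desired conditional independence. This is Spirtes' classical linear-model result, which I would either invoke directly or rederive in this explicit form.

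Case 2 is the real work. I would restrict attention to the ancestral subset $\bfD = \an_{\G(M)}(\bfA \cup \bfB \cup \bfC)$: since restricting to an ancestral set preserves both $d$-separation statements and marginal solutions, and $M$ is uniquely solvable on $\bfD$ by assumption, it suffices to prove the claim for the induced sub-SCM on $\bfD$. Inside $\bfD$, unique solvability w.r.t.\ each SCC also follows, giving a factorization of the joint distribution of $\rv{\bfD}$ into discrete kernels $P(\rv{S} \mid \rv{\pa(S)\setminus S})$ indexed by the strongly connected components $S$ of the induced graph. The goal is then to show that this SCC-indexed factorization, together with discreteness, implies the full $d$-separation Markov property, not just the $\sigma$-separation one.

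The main obstacle is the unblockable noncolliders --- intermediate nodes that sit strictly inside an SCC with both path-edges outgoing into that same component. Such a node is not a measurable function of its "parents on the path" alone, so the classical DAG-style moralization-plus-ancestor-graph argument does not transfer mechanically; one must appeal to the joint kernel of the whole SCC, and discreteness is what allows the necessary marginalization identities to close. I would attempt an induction on the number of SCCs crossed by a purported $d$-connecting path, reducing the induction step to a single-SCC independence statement that can be read off the discrete SCC-kernel. This last reduction is where the argument is expected to be most delicate, and where discreteness (rather than mere unique solvability) does essential work: in the non-discrete, non-linear setting one only has $\sigma$-separation, so any attempted proof must at some point visibly use the assumption that the endogenous domains are discrete.
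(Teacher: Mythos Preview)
The paper does not give a self-contained proof of this theorem: its entire argument is a reference to Theorem~A.7 in \citet{Bongers++_AOS_21}, together with a short historical remark on each case. So there is no detailed in-paper proof to compare your sketch against; you are in fact attempting strictly more than the paper does.

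That said, a few points in your sketch would not survive as written. First, in Case~3 the phrase ``$I-B$ is almost-surely invertible given the exogenous-parent and density assumptions'' is confused: $B$ is a fixed coefficient matrix, not random, so invertibility of $I-B$ is a deterministic condition and is not implied by the exogenous density or the ``each $v$ has an exogenous parent'' assumptions. In the linear setting one needs $I-B$ invertible to even get existence and uniqueness of the observational distribution; you should either assume it explicitly or point to where in \citet{Bongers++_AOS_21} it is derived from the standing hypotheses. Second, your reduction in Case~2 from ``uniquely solvable on every ancestral subset'' to ``uniquely solvable on every SCC'' is asserted but not argued: an SCC is not ancestral, and unique solvability does not in general pass to subsets, so you need the (true, but nontrivial) decomposition lemma from \citet{Bongers++_AOS_21} that peels off SCCs one at a time from an ancestral set.

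More substantively, your overall strategy---prove the $\sigma$-separation property via Theorem~2 and then \emph{upgrade} to $d$-separation---is clean for Case~1 (where the two notions coincide) but is not how the discrete and linear cases are actually handled in \citet{Bongers++_AOS_21}. There the discrete case is proved via a direct factorization/moralization argument on ancestral subgraphs (this is where unique solvability on \emph{ancestral} sets, not merely on SCCs, is used essentially), and the linear case via the explicit path-sum argument you allude to. Your proposed ``induction on the number of SCCs crossed by a $d$-connecting path'' for Case~2 is plausible-sounding but you yourself flag the key step as unresolved; given that the general (non-discrete, non-linear) upgrade from $\sigma$- to $d$-separation is known to fail, any complete argument must make the discreteness hypothesis do visible work at a specific point, and your sketch does not yet identify that point.
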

\begin{proof}
See the proof of Theorem A.7 in \citet{Bongers++_AOS_21}.
The acyclic case is well known. 
The discrete case fixes the erroneous theorem by \citet{PearlD1996}, for which a counterexample was found by \citet{Nea00}, by adding the 
assumption of unique solvability with respect to each ancestral subset, and extends it to allow for bidirected edges in the graph. 
The linear case is an extension of existing results for the linear-Gaussian setting without bidirected edges \citet{Spirtes1994, Spirtes1995, Kos96} to a linear (possibly non-Gaussian) setting with bidirected edges in the graph.
\end{proof}
For this paper, we assume that the global directed Markov property holds with respect to a graph that contains no bidirected edges.
From the above theorem, it follows that this will hold if the data comes from the observational distribution of a causally sufficient SCM that 
falls into either the acyclic case (\ref{thm:d_separation_acyclic}), the discrete case (\ref{thm:d_separation_discrete}), or the linear case (\ref{thm:d_separation_linear}).
Note that these assumptions are sufficient, but not necessary.

\end{document}